\def\eqref#1{equation~\ref{#1}}
\def\1{\bm{1}}
\DeclareMathAlphabet{\mathsfit}{\encodingdefault}{\sfdefault}{m}{sl}
\SetMathAlphabet{\mathsfit}{bold}{\encodingdefault}{\sfdefault}{bx}{n}
\DeclareMathOperator*{\argmin}{arg\,min}
\DeclareFontFamily{OT1}{pzc}{}
\DeclareFontShape{OT1}{pzc}{m}{it}{<-> s * [1.2] pzcmi7t}{}
\DeclareMathAlphabet{\mathpzc}{OT1}{pzc}{m}{it}
\newtheorem{theorem}{Theorem}
\newtheorem{corollary}{Corollary}
\def\mainmatter{%
  \cleardoublepage
  \@mainmattertrue
  \pagenumbering{arabic}
  \def\mainmatter{\cleardoublepage\@mainmattertrue}
}
    \gdef\node@@on@layer{%
      \setbox\tikz@tempbox=\hbox\bgroup\pgfonlayer{#1}\unhbox\tikz@tempbox\endpgfonlayer\egroup}
\def\node@on@layer{\aftergroup\node@@on@layer}
\newtheorem{definition}{Definition}
\title{A Theory for Knowledge Transfer in Continual Learning}
\author{Diana Benavides-Prado  \\
School of Computer Science\\
The University of Auckland\\
New Zealand \\
\texttt{d.benavides-prado@auckland.ac.nz} \\
\And 
Patricia Riddle  \\
School of Computer Science\\
The University of Auckland\\
New Zealand \\
\texttt{p.riddle@auckland.ac.nz} \\
}
\begin{document}

\maketitle

\begin{abstract}
Continual learning of a stream of tasks is an active area in deep neural networks. The main challenge investigated has been the phenomenon of catastrophic forgetting or interference of newly acquired knowledge with knowledge from previous tasks. Recent work has investigated forward knowledge transfer to new tasks. Backward transfer for improving knowledge gained during previous tasks has received much less attention. There is in general limited understanding of how knowledge transfer could aid tasks learned continually. We present a theory for knowledge transfer in continual supervised learning, which considers both forward and backward transfer. We aim at understanding their impact for increasingly knowledgeable learners. We derive error bounds for each of these transfer mechanisms. These bounds are agnostic to specific implementations (\textit{e.g.} deep neural networks). We demonstrate that, for a continual learner that observes related tasks, both forward and backward transfer can contribute to an increasing performance as more tasks are observed. 
\end{abstract}

\section{Introduction}
Learning a continual stream of tasks has been a long-standing challenge in machine learning \citep{ring1997child,chen2018lifelong}. Continual learning with deep neural networks has been an active area of research over the past few years \citep{delange2021continual}, and it has multiple applications in a range of problem domains \citep{lesort2020continual,lee2020clinical,maschler2021regularization}. Catastrophic forgetting of existing knowledge for tasks learned sequentially has been the main challenge \citep{delange2021continual}. A variety of methods for this problem in supervised continual learning have been proposed, including approaches for replaying examples \citep{lopez2017gradient}, regularisation-based methods \citep{kirkpatrick2017overcoming} and network expansion methods \citep{ostapenko2019learning}. 

Knowledge transfer has recently been explored as an alternative for improving the performance of continual learning systems. Transferring knowledge in the forward direction has demonstrated some gains \citep{ke2021achieving}. Backward transfer on the other hand has been paid much less attention in continual learning with deep neural networks \citep{riemer2018learning,ke2020continual,vogelstein2020representation,new2022lifelong}. However, backward transfer has succeeded in other lifelong learning studies that use techniques such as Support Vector Machines (SVMs) \citep{benavides2020towards}, and continues to be a desired property of continual learning systems \citep{irina2022}.

We develop a theory for knowledge transfer in continual learning. We first derive error bounds for individual tasks, when these are subject to forward transfer when learned for the first time, or to backward transfer from future tasks when these are learned. We then consider the order of arrival of tasks, since this influences the the amount of transfer that task is subject to. Based on the bounds derived for individual tasks, we calculate error bounds for a continual learner that learns related tasks sequentially using forward and backward transfer.

Our framework relies on three core assumptions. First, the continual learner is embedded into an \textit{environment} of related tasks. This allows us to treat the problem of learning a sequence of tasks as the problem of learning a bias for the whole environment incrementally. Learning this bias is helpful since the continual learning will perform better at any task in that environment. Our second assumption is that relatedness between these tasks relies on the \textit{similarity} between their example generating distributions. This assumption allows us to use a set of \textit{transformation functions} as a tool for constraining the hypothesis family for learning a particular task, based on its similarity to other tasks in the environment (from which forward or backward transfer are to be performed). This tool has been used in other studies in multitask learning \citep{ben2008notion}. Our final assumption is that each task has a sufficient number of examples from which to learn. This assumption distinguishes our framework from approaches in zero-shot or few-shot learning. However, in Section \ref{sec:forward} we demonstrate that the number of examples required to learn decreases with the number of tasks.

Our proposed framework is generic and does not rely on any practical assumptions about the implementation of the continual learner (\textit{e.g.} in terms of the learning technique, the implementation architecture, the mechanisms used for transfer or the continual learning scenario - task-incremental, domain-incremental or class-incremental \citep{van2019three}). We aim to provide a rigorous theoretical analysis to show the potential of knowledge transfer while learning sequentially, and to encourage more research in this direction. 

This paper is organised as follows. Section \ref{sec:previous_research} describes previous research in knowledge transfer for continual learning. Section \ref{sec:preliminaries} provides preliminaries and notation. Section \ref{sec:forward} describes error bounds derived for tasks learned continually using forward knowledge transfer. Section \ref{sec:backward} describes error bounds for tasks learned continually using backward transfer. Section \ref{sec:continual} describes error bounds for a continual learner that uses both forward and backward transfer. Finally, Section \ref{sec:discussion_conclusion} provides some discussion and final remarks.

\section{Previous Research}
\label{sec:previous_research}
Catastrophic forgetting or interference of new tasks with previously acquired knowledge has been studied extensively in supervised continual learning with deep neural networks \citep{delange2021continual}. Several methods to avoid catastrophic forgetting have been proposed, ranging from example replay \citep{lopez2017gradient,van2020brain} to regularisation-based \citep{kirkpatrick2017overcoming,zeng2019continual} to dynamic networks \citep{yoon2017lifelong,hung2019compacting}.  Beyond catastrophic forgetting, the classic aim of continual learning systems has been to achieve increasingly knowledgeable systems \citep{ring1997child,chen2018lifelong}. Knowledge transfer has been proposed as a mechanism to achieve this \citep{ke2020continual,rostami2020using,benavides2020beyond}. Forward transfer with continual deep neural networks has been studied recently \citep{ke2021achieving}. Backward transfer, in contrast, has received much less attention \citep{riemer2018learning,ke2020continual,vogelstein2020representation}, although it was explored with alternative techniques such as SVM \citep{benavides2020towards}.

Ben-David and Borbely (2008)\nocite{ben2008notion} and Baxter (2000)\nocite{baxter2000model} studied the effects of learning multiple related tasks jointly with multitask learning. Baxter (2000) derived the expected average error for a group of tasks learned jointly. Ben-David and Borbely (2008) derived similar bounds for a single task learned under the same framework. More recently, Benavides-Prado, Koh and Riddle (2020) \nocite{benavides2020towards} derived error bounds of knowledge transfer across SVM models in supervised continual learning. This research showed that given a set of related tasks, backward transfer with SVM can be used to achieve systems that improve their performance with each incoming task \citep{benavides2020towards}. Furthermore, forward transfer can also be used to aid learning of new tasks. Although novel, these bounds were specific to the implementation using SVM. Here we extend this work by deriving error bounds that are agnostic to the implementation, for both forward transfer and backward transfer. We also derive error bounds for a continual learner that uses knowledge transfer whilst learning related tasks sequentially. 

Other theoretical frameworks in transfer learning have studied how the degree of relatedness among tasks helps transfer  \citep{lampinen2018analytic}, and how transfer helps curriculum learning \citep{weinshall2018curriculum}. Theoretical studies in continual learning have studied the effects of task similarity in catastrophic forgetting \citep{lee2021continual}, and discovered that optimal continual learning is NP-hard and requires perfect memory \citep{knoblauch2020optimal}. However, to the best of our knowledge there is no prior study that evaluates the effects of forward and backward knowledge transfer in learning a continual stream of supervised tasks.

\section{Preliminaries and Definitions}
\label{sec:preliminaries}
Supervised continual learning is about learning a stream of tasks $\mathbf{T} = \{T_{1}, \ldots, T_{n}\}$. A given task $T$ in the sequence has an underlying probability distribution $P(X, Y)$ (or simply $P$, which we use later indistinctly). For that task, the aim is to learn a function $f: X \rightarrow Y$, that maps the input space $X$ to the output space $Y$. Learning works by exploring a hypothesis space $H$ on that task, and finding the hypothesis $h \in H$ such that: 

\begin{equation}
\begin{gathered}
Er^{P}(h) = min_{h \in H} \mathcal{L}(h(x), y)
\end{gathered}
\end{equation}

where $\mathcal{L}$ is a loss function. Naturally, estimating the error of $h$ over the actual distribution $P$ is difficult since $P$ can not be observed directly. Instead, a sample $S$ of $m$ examples extracted repeatedly from $P$ is used such that:

\begin{equation}
\begin{gathered}
S = \{(x_{1}, y_{1}), (x_{2}, y_{2}), \ldots, (x_{m}, y_{m})\}
\end{gathered}
\end{equation}

And the empirical error of $h \in H$ over $S$ is such that:
\begin{equation}
\begin{gathered}
\hat{Er}^{S}(h) = min_{h \in H} \dfrac{1}{m} \sum_{i=1}^{m} \mathcal{L}(h(x_{i}), y_{i})
\end{gathered}
\label{eq:empirical_S}
\end{equation}

To find the best $h$ that satisfies Eq. \ref{eq:empirical_S}, the learner aims to find the hypothesis that best fits this sample better, such that:

\begin{equation}
\begin{gathered}
h^{*} = \inf_{h \in H} \mathcal{L}(h(x), y)
\end{gathered}
\end{equation}

Knowledge transfer for continual learning aims to share knowledge across tasks observed sequentially. In our framework, we distinguish two types of transfer: 1) forward transfer, which aims to learn new or \textit{target} tasks better or faster by transferring knowledge gained during tasks learned earlier, and 2) backward transfer, which aims to improve future performance over previous or \textit{source} tasks by using knowledge collected while learning new tasks. We assume that tasks observed by the continual learner are related. Therefore, these tasks are assumed to belong to the same \textit{environment}, and the continual learner can become better at learning in this environment as more tasks are observed.

Formally, we define the environment of the continual learner as follows:

\begin{definition}
An environment $(\mathcal{P}, \mathcal{Q})$ of related tasks, corresponds to the set of all probability distributions on $\mathcal{X} \times \mathcal{Y}$,  denoted $\mathcal{P}$ , and a distribution on $\mathcal{P}$, denoted $\mathcal{Q}$. Instead of exploring a single hypothesis space, the continual learner has access to a family of hypothesis spaces $\mathbb{H} = \{H_{1}, H_{2}, \ldots, H_{n}\}$, one for each task. In practice, the learner has access to multiple samples to learn from, one sample for each task, such that $\mathbf{S} = \{S_{1}, \ldots, S_{n}\}$ are drawn at random from underlying probability distributions $\mathbf{P} = \{P_{1}, \ldots, P_{n} \}$. 
\label{def:environment}
\end{definition}

Access to a family of hypothesis spaces rather than a single hypothesis space, as in single-task learning, gives the continual learner the potential to learn a good bias that can generalise well to novel tasks from the same environment. Rather than producing a hypothesis that with high probability will perform well on future examples of a particular task, by learning related tasks continually the learner will produce a hypothesis space that with high probability will perform well on future tasks within the same environment. This main result has been demonstrated in the context of multitask learning \citep{baxter2000model}, and is the main result we demonstrate in Sections \ref{sec:forward}-\ref{sec:continual} for a stream of tasks learned continually using knowledge transfer.

The notion of relatedness for tasks in the environment of the continual learner relies on the similarity of their example generating distributions \citep{ben2008notion}. Formally, given a set of transformation functions $\mathpzc{f} \in \mathcal{F}$ such that $\mathpzc{f} : \mathcal{X} \rightarrow \mathcal{X}$, tasks in the environment are $\mathcal{F}$-related if, for some fixed probability distribution over $\mathcal{X} \times Y$, if the examples in each of these tasks can be generated by applying some $\mathpzc{f} \in \mathcal{F}$ to that distribution. Therefore, we can define the equivalence relation \citep{raczkowski1990equivalence} $\sim_{\mathcal{F}}$ on $\mathbb{H}$, where $\mathbb{H}$ is a family of hypothesis spaces for all tasks in the environment, as follows: 

\begin{definition} Let $\{P_{1}, \ldots, P_{n}\}$ be the underlying probability distributions of a set of $n$ tasks over a domain $\mathcal{X} \times Y$. Let $\mathcal{F}$ be a set of transformations $\mathpzc{f}: \mathcal{X} \rightarrow \mathcal{X}$. Let $P_{1}$ and $P_{2}$ be related if one can be generated from the other by applying some $\mathpzc{f} \in \mathcal{F}$, such that $P_{1} = \mathpzc{f}[P_{2}]$ (and therefore $P_{2} = \mathpzc{f}^{-1}[P_{1}]$)  or $P_{2} = \mathpzc{f}[P_{1}]$ (and therefore $P_{1} = \mathpzc{f}^{-1}[P_{2}]$). The samples $\{S_{1}, \ldots, S_{n}\}$ to be used during learning tasks $\{T_{1}, \ldots, T_{n}\}$ are said to be $\mathcal{F}$-related if these samples come from $\mathcal{F}$-related probability distributions. 

Let $\mathbb{H}$ be a family of hypothesis spaces over the domain $\mathcal{X} \times Y$, and $\mathbb{H}$ be closed under the action of $\mathcal{F}$. Let $\mathcal{H}$ be a family of hypothesis spaces that consist of sets of hypotheses $[\mathpzc{h}] \in \mathbb{H}$ which are equivalent up to transformations in $\mathcal{F}$. If $\mathcal{F}$ acts as a group over $\mathbb{H}$ because:
\begin{itemize}
\item For every $\mathpzc{f} \in \mathcal{F}$ and every $\mathpzc{h} \in \mathbb{H}$, $\mathpzc{h} \circ \mathpzc{f} \in \mathbb{H}$, and
\item $\mathcal{F}$ is closed under transformation composition and inverses, \textit{i.e.} for every $\mathpzc{f}, \mathpzc{g} \in \mathcal{F}$, the inverse transformation, $\mathpzc{f}^{-1}$, and the composition, $\mathpzc{f} \circ \mathpzc{g}$ are also members of $\mathpzc{F}$
\end{itemize}
Then the equivalence relation $\sim_{\mathcal{F}}$ on $\mathbb{H}$ is defined by: 
$\mathpzc{h_{1}}\sim_{\mathcal{F}}\mathpzc{h_{2}} \iff$ there exists $\mathpzc{f} \in \mathcal{F}$ such that $\mathpzc{h_{2}} = \mathpzc{h_{1}} \circ \mathpzc{f}$.

Therefore this framework considers the family of hypothesis spaces $\mathcal{H} = \{[\mathpzc{h}] : [\mathpzc{h}] \in \mathbb{H}\}$, which is the family of all equivalence classes of $\mathbb{H}$ under $\sim_{\mathcal{F}}$. 
\label{def:initial}
\end{definition}

The original setting of this framework is in multitask learning \citep{ben2008notion}, where the equivalence class $[\mathpzc{h}]$ for a target task is first found using samples from all tasks. This requires to first identify aspects of all tasks that are invariant under $\mathcal{F}$. A second step restricts the learning of a particular task to selecting a hypothesis $\mathpzc{h'} \in [\mathpzc{h}]$ as the hypothesis for that task. Therefore, the target task can benefit from transfer during this second step by exploring the  hypothesis space to be explored for the target task $[\mathpzc{h}]$ that contains these invariances.

In continual learning we are faced with a similar problem, but rather than learning tasks jointly these are observed sequentially. However, provided these tasks are $\mathcal{F}$-related, we can adopt a similar framework to derive error bounds of a target task that is learned with forward transfer from a set of source tasks, and of source tasks for which knowledge is updated with backward transfer from a recently learned target tas. In the following sections we develop a theory of knowledge transfer across continual tasks that use these two transfer mechanisms. 

\section{Forward Knowledge Transfer across Related Tasks}
\label{sec:forward}
In this and following sections, we will use $^{(t)}$ to refer to a target task, or target probability distribution or target sample, and ${^{(s)}}$ to denote a source task, or source probability distribution or source sample. In forward transfer, the aim is to learn a \textit{target} task $T^{(t)}$ helped by knowledge obtained during previous $n$ source tasks $\{T^{(s)}_{1}, \ldots, T^{(s)}_{n}\}$, with probability distributions $P^{(t)}$ and $\{P^{(s)}_{1}, \ldots, P^{(s)}_{n}\}$ and their corresponding observed samples $S^{(t)}$ and $\{S^{(s)}_{1}, \ldots, S^{(s)}_{n}\}$. Forward transfer for a continual learner which observes $\mathcal{F}$-related tasks is defined as follows: 
\begin{definition}
Given classes $\mathcal{F}$ and $\mathcal{H}$, and a set of labeled samples $\{S^{(s)}_{1}, \ldots, S^{(s)}_{n}\}$ for a set of $n$ source tasks and a labeled sample $S^{(t)}$ for a target task, in forward knowledge transfer while learning task $T^{(t)}$, the continual learner: 
\begin{enumerate}
\item Has access to $[\mathpzc{h}^*] \in \mathcal{H}$, obtained as a result of minimising $\inf_{\mathpzc{h}_{1}, \ldots, \mathpzc{h_{n}} \in [\mathpzc{h}]} \sum_{i=1}^{n} \mathbb{\hat{E}}r^{S^{(s)}_{i}}(\mathpzc{h}_{i})$ over all $\mathpzc{[h]} \in \mathcal{H}$.
\item Selects $\mathpzc{h^{\Diamond}} \in \mathpzc{[h^*]} $ that minimises $\mathbb{\hat{E}}r^{S^{(t)}}(\mathpzc{h'})$ over all $\mathpzc{h'} \in \mathpzc{[h^*]}$, and outputs $\mathpzc{h^{\Diamond}}$ as the hypothesis for $T^{(t)}$.
\end{enumerate}
\label{def:definition_forward}
\end{definition}

In practice, having access to $[\mathpzc{h^{*}}]$ during a target task $T^{(t)}$ implies that the continual learner can access to some representation of the knowledge obtained during previous tasks (\textit{e.g.} access to a neural network representing that knowledge). We derive error bounds for learning a target task $T^{(t)}$ helped by knowledge transfer from $\mathcal{F}$-related source tasks as follows:

\begin{theorem} Let $\{P^{(s)}_{1}$, \ldots, $P^{(s)}_{n}\}$ and $P^{(t)}$ be a set of $\mathcal{F}$-related probability distributions, and $\{S^{(s)}_{1}$, \ldots, $S^{(s)}_{n}\}$ and $S^{(t)}$ random samples representing these distributions. Let $\mathcal{F}$ and $\mathcal{H}$ be defined as in Definition \ref{def:initial}. Let $d_{max} = sup\{VC\mbox{-}dim(H): H \in \mathcal{H}\}$. Let $d_{\mathcal{H}}(n) = max_{[\mathpzc{h}] \in \mathcal{H}}VC\mbox{-}dim([\mathpzc{h}])$. Let $\mathpzc{h^{\Diamond}}$ be selected according to Definition \ref{def:definition_forward}. Then, for every constant $\epsilon_{1}$, $\epsilon_{2}$, $\delta > 0$, with $|S^{(t)}|$ and $|S^{(s)}_{i}|$ defined similarly to Theorem $3$ in Ben-David and Borbely (2008): 
\begin{equation}
\begin{gathered}
|S^{(t)}| \geq \dfrac{64}{\epsilon_{1}^2} \bigg[2d_{max} log\dfrac{12}{\epsilon_{1}} + log\dfrac{8}{\delta}\bigg]
\end{gathered}
\label{eq:examples_target}
\end{equation}
and, for all i $\leq$ n:
\begin{equation}
\begin{gathered}
|S^{(s)}_{i}| \geq \dfrac{88}{\epsilon_{2}^{2}} \bigg[2d_{\mathcal{H}}(n) log{\dfrac{22}{\epsilon_{2}}} + \dfrac{1}{2} log \dfrac{8}
{\delta}\bigg]
\end{gathered}
\label{eq:examples_sources}
\end{equation}
then with probability greater than $(1 - \delta)$: 
\begin{equation}
\begin{gathered}
\mathbb{E}r^{P^{(t)}}(\mathpzc{h^\Diamond}) \leq \inf_{\mathpzc{h}\in \mathbb{H}} \mathbb{E}r^{P^{(t)}}(\mathpzc{h}) + 2(\epsilon_{1} + \epsilon_{2})
\end{gathered}
\end{equation}
\label{theom:forward}
\end{theorem}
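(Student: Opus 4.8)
The plan is to follow the two–stage structure of Definition~\ref{def:definition_forward}, matching the two slack terms: $2\epsilon_2$ will account for choosing a near–optimal equivalence class from the source samples (stage~1), and $2\epsilon_1$ for choosing a near–optimal hypothesis inside that class from the target sample (stage~2). This adapts Theorem~3 of Ben-David and Borbely (2008), with the simplification that the two stages here consume \emph{disjoint, independent} samples. First I would invoke a VC–type uniform convergence bound over $\mathcal{H}$: using the sample–size condition~$(\ref{eq:examples_sources})$ — whose constants $88,22$ and the factor $\tfrac12\log\tfrac{8}{\delta}$ come from the averaged–over–$n$–tasks double–sample estimate with complexity parameter $d_{\mathcal{H}}(n)$ — with probability at least $1-\delta/2$ we have, simultaneously for every $[\mathpzc{h}]\in\mathcal{H}$ and all representatives $\mathpzc{h}_1,\dots,\mathpzc{h}_n$, that $\bigl|\tfrac1n\sum_{i=1}^n \hat{\mathbb{E}}r^{S^{(s)}_i}(\mathpzc{h}_i)-\tfrac1n\sum_{i=1}^n \mathbb{E}r^{P^{(s)}_i}(\mathpzc{h}_i)\bigr|\le\epsilon_2$. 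Since $[\mathpzc{h}^*]$ minimises the empirical averaged source error, a standard ERM comparison gives that its averaged true source error exceeds the best achievable averaged true source error by at most $2\epsilon_2$.

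Next I would transfer this source–stage guarantee to the target distribution using $\mathcal{F}$–relatedness. Let $\mathpzc{h}^{\mathrm{opt}}$ (nearly) attain $\inf_{\mathpzc{h}\in\mathbb{H}}\mathbb{E}r^{P^{(t)}}(\mathpzc{h})$. Because each $P^{(s)}_i$ is the image of $P^{(t)}$ under some $\mathpzc{f}_i\in\mathcal{F}$, $\mathbb{H}$ is closed under $\mathcal{F}$, and the loss is invariant under the corresponding change of variables, one has $\mathbb{E}r^{P^{(s)}_i}(\mathpzc{g})=\mathbb{E}r^{P^{(t)}}(\mathpzc{g}\circ\mathpzc{f}_i)$; hence $[\mathpzc{h}^{\mathrm{opt}}]$ contains, for each $i$, a representative with true $P^{(s)}_i$–error equal to $\mathbb{E}r^{P^{(t)}}(\mathpzc{h}^{\mathrm{opt}})$, so its averaged true source error equals $\mathbb{E}r^{P^{(t)}}(\mathpzc{h}^{\mathrm{opt}})$. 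Combining with the previous paragraph, the averaged true source error of $[\mathpzc{h}^*]$ is at most $\inf_{\mathpzc{h}\in\mathbb{H}}\mathbb{E}r^{P^{(t)}}(\mathpzc{h})+2\epsilon_2$, so some source index $i^*$ admits a representative $\mathpzc{h}_{i^*}\in[\mathpzc{h}^*]$ with $\mathbb{E}r^{P^{(s)}_{i^*}}(\mathpzc{h}_{i^*})\le \inf_{\mathpzc{h}\in\mathbb{H}}\mathbb{E}r^{P^{(t)}}(\mathpzc{h})+2\epsilon_2$. Composing it with the corresponding transformation in $\mathcal{F}$ keeps us inside the same equivalence class (as $\mathcal{F}$ is closed under composition and inverses and $\mathbb{H}$ under the action of $\mathcal{F}$), yielding $\tilde{\mathpzc{h}}\in[\mathpzc{h}^*]$ with $\mathbb{E}r^{P^{(t)}}(\tilde{\mathpzc{h}})\le\inf_{\mathpzc{h}\in\mathbb{H}}\mathbb{E}r^{P^{(t)}}(\mathpzc{h})+2\epsilon_2$.

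For stage~2, conditioning on the source samples fixes the (random) class $[\mathpzc{h}^*]$, which has VC dimension at most $d_{max}$; since $S^{(t)}$ is independent of the source samples, the sample–size condition~$(\ref{eq:examples_target})$ (constants $64,12$, complexity $d_{max}$) gives, with probability at least $1-\delta/2$ over $S^{(t)}$, that $|\hat{\mathbb{E}}r^{S^{(t)}}(\mathpzc{h})-\mathbb{E}r^{P^{(t)}}(\mathpzc{h})|\le\epsilon_1$ for all $\mathpzc{h}\in[\mathpzc{h}^*]$ — independence is what lets us avoid a uniform bound over all of $\mathcal{H}$ here. A union bound makes both good events hold with probability at least $1-\delta$. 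On that event, using that $\mathpzc{h}^{\Diamond}$ minimises $\hat{\mathbb{E}}r^{S^{(t)}}$ over $[\mathpzc{h}^*]\ni\tilde{\mathpzc{h}}$, we get $\mathbb{E}r^{P^{(t)}}(\mathpzc{h}^{\Diamond})\le \hat{\mathbb{E}}r^{S^{(t)}}(\mathpzc{h}^{\Diamond})+\epsilon_1\le \hat{\mathbb{E}}r^{S^{(t)}}(\tilde{\mathpzc{h}})+\epsilon_1\le \mathbb{E}r^{P^{(t)}}(\tilde{\mathpzc{h}})+2\epsilon_1\le \inf_{\mathpzc{h}\in\mathbb{H}}\mathbb{E}r^{P^{(t)}}(\mathpzc{h})+2(\epsilon_1+\epsilon_2)$, which is the claim.

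I expect the main obstacle to be the middle ``transfer'' step: carefully verifying that the transformed representatives genuinely remain inside the equivalence class $[\mathpzc{h}^*]$ under the group action of Definition~\ref{def:initial}, and that the identities of errors across $\mathcal{F}$–related distributions hold exactly rather than approximately, so that the only slack introduced is the $2\epsilon_2$ from empirical estimation. A secondary piece of bookkeeping is importing the precise VC constants in $(\ref{eq:examples_target})$ and $(\ref{eq:examples_sources})$ from the underlying uniform–convergence theorems (rather than re–deriving them) and confirming they match the two events used above.
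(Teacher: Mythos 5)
Your proof is correct and follows essentially the same route as the paper's: both adapt the two--stage argument of Ben-David and Borbely's Theorem~3, charging $2\epsilon$ for selecting the equivalence class from the averaged empirical source errors and another $2\epsilon$ for selecting a hypothesis within that class from the target sample, with you merely unpacking the uniform-convergence and $\mathcal{F}$-relatedness transfer steps that the paper imports wholesale as their Theorem~2. The only cosmetic difference is that you attach $\epsilon_2$ to the source stage and $\epsilon_1$ to the target stage (consistent with the theorem's sample-size conditions), whereas the paper's proof uses the opposite labelling; since the final bound $2(\epsilon_1+\epsilon_2)$ is symmetric, nothing changes.
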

\begin{proof} Let $\mathpzc{h^\#}$ be the best $P^{(t)}$ label predictor in $\mathbb{H}$, \textit{i.e.} $\mathpzc{h}^\# = \argmin_{\mathpzc{h} \in \mathbb{H}} \mathbb{E}r^{P^{(t)}}(\mathpzc{h})$. Let $[\mathpzc{h}^*]$ be the equivalence class picked according to Definition \ref{def:definition_forward}. By the choice of $\mathpzc{h}^*$:
\begin{equation}
\begin{gathered}
\inf_{\mathpzc{h_{1}}, \ldots, \mathpzc{h_{n}} \in \mathpzc{[h^*]}} \dfrac{1}{n} \sum_{i=1}^{n}\mathbb{\hat{E}}r^{S^{(s)}_{i}} (\mathpzc{h}_{i})  \leq \inf_{\mathpzc{h_{1}}, \ldots, \mathpzc{h_{n}} \in [\mathpzc{h^\#]}} \dfrac{1}{n} \sum_{i=1}^{n}\mathbb{\hat{E}}r^{S^{(s)}_{i}} (\mathpzc{h}_{i})
\end{gathered}
\end{equation}
By Theorem 2 in Ben-David and Borbely (2008), with probability greater than $(1 - \delta/2)$: 
\begin{equation}
\begin{gathered}
\inf_{\mathpzc{h_{1}}, \ldots, \mathpzc{h_{n}} \in \mathpzc{[h^\#]}} \dfrac{1}{n} \sum_{i=1}^{n}\mathbb{\hat{E}}r^{S^{(s)}_{i}} (\mathpzc{h}_{i}) \leq \mathbb{E}r^{P^{(t)}} ([\mathpzc{h}^{\#}]) + \epsilon_{1}
\end{gathered}
\end{equation}
and:
\begin{equation}
\begin{gathered}
\mathbb{E}r^{P^{(t)}} ([\mathpzc{h^*}]) \leq \inf_{\mathpzc{h_{1}, \ldots, \mathpzc{h_{n}} \in \mathpzc{[h^*]}}} \dfrac{1}{n} \sum_{i=1}^{n}\mathbb{\hat{E}}r^{S^{(s)}_{i}} (\mathpzc{h}_{i}) + \epsilon_{1}
\end{gathered}
\end{equation}
Then, combining the inequalities above, with probability greater than $(1 - \delta/2)$: 
\begin{equation}
\begin{gathered}
\mathbb{E}r^{P^{(t)}} ([\mathpzc{h^*}]) \leq \mathbb{E}r^{P^{(t)}} ([\mathpzc{h}^{\#}]) + 2 \epsilon_{1}
\end{gathered}
\end{equation}

Since $\mathpzc{h^\Diamond} \in [\mathpzc{h^*}]$, with probability greater than $(1 - \delta/2)$, $\mathpzc{h^\Diamond}$ will have an error for $P^{(t)}$ which is within $2\epsilon_{2}$ of the best hypothesis there, \textit{i.e.} $\mathbb{E}r^{P^{(t)}} ([\mathpzc{h^*}])$. Therefore:
\begin{equation}
\begin{gathered}
\mathbb{E}r^{P^{(t)}} (\mathpzc{h^\diamond}) \leq \mathbb{E}r^{P^{(t)}} (\mathpzc{h}^{\#}) + 2( \epsilon_{1} + \epsilon_{2})
\end{gathered}
\end{equation}
\end{proof}
Theorem \ref{theom:forward} implies that, for a sufficiently large number of examples for the sources and the target tasks, forward transfer is expected to benefit learning of a target task. This result is achieved by choosing a hypothesis space for $T^{(t)}$ which is biased towards the hypothesis space learned for previous $\mathcal{F}$-related tasks from the same environment. The extent of this benefit depends on the number of examples per task (see Eq. \ref{eq:examples_target} and Eq. \ref{eq:examples_sources}). Baxter (2000) demonstrated that the number of examples required per task decreases along with an increasing number of tasks, in particular:
\begin{equation}
\begin{gathered}
|S| = \mathcal{O}\Big(\dfrac{1}{n} log \mathcal{C}(\epsilon, \mathcal{H}_{l}^{n})\Big)
\end{gathered}
\label{eq:number_examples_tasks}
\end{equation}

where $\mathcal{C}(\epsilon, \mathcal{H}_{l}^{n})$ is the capacity of the learner given an error $\epsilon$ and a set of $n$ sets of loss functions $\{\mathbf{h}_{l}^{1}, \ldots, \mathbf{h}_{l}^{n} \} \in \mathcal{H}_{l}^{n}$ for the family of hypothesis spaces $\mathcal{H}$. Provided that this capacity increases sublinearly with $n$, the number of examples required per task will decrease with an increasing number of tasks.

The amount of transfer to a target task and therefore the extent to which the bound in Theorem \ref{theom:forward} is satisfied depends on how many source tasks are used for transfer. Intuitively, the larger this number, the smaller the bound, since the target task will have a better bias of its environment with more related tasks having been observed, which would lead to a better hypothesis space to be selected for that task. Therefore, the later a target task is observed, the greater the opportunity for it to benefit from forward transfer. This is in accordance with previous research that demonstrated that a larger number of tasks learned continually benefits transfer  \citep{benavides2017accgensvm,benavides2020towards}. Next we analyse the effect of the task order in forward transfer, and the error bounds of a target task depending on that order. Next we derive bounds for forward transfer that account for the order of the task being observed in the sequence.

\begin{definition}
Given classes $\mathcal{F}$ and $\mathcal{H}$, a set of labeled samples \{$S^{(s)}_{1}$, \ldots,  $S^{(s)}_{n}\}$ for a set of source tasks and a labeled sample $S^{(t)}$ for a target task. Let:
\begin{itemize}
\item $[\mathpzc{h}_{n}^*] \in \mathcal{H}$ be the result of minimising $\inf_{\mathpzc{h}_{1}, \ldots, \mathpzc{h_{n}} \in [\mathpzc{h_{n}^*}]} \dfrac{1}{n} \sum_{i=1}^{n} \mathbb{\hat{E}}r^{S^{(s)}_{i}} (\mathpzc{h_{i}})$ over all $\mathpzc{[h]} \in \mathcal{H}$, at time $n$.
\item $[\mathpzc{h}_{n+z}^*] \in \mathcal{H}$ be the result of minimising $\inf_{\mathpzc{h}_{1}, \ldots, \mathpzc{h_{n+z}} \in [\mathpzc{h_{n+z}^*}]} \dfrac{1}{n+z} \sum_{i=1}^{n+z} \mathbb{\hat{E}}r^{S^{(s)}_{i}} (\mathpzc{h_{i}})$ over all $\mathpzc{[h]} \in \mathcal{H}$, at time $n+z$.
\item $\mathpzc{h_{n}^{\Diamond}} \in \mathpzc{[h_{n}^*]} $ that minimises $\mathbb{\hat{E}}r^{S^{(t)}} (\mathpzc{h'})$ over all $\mathpzc{h'} \in \mathpzc{[h_{n}^*]}$, and outputs $\mathpzc{h_{n}^{\Diamond}}$ as the hypothesis for task $T^{(t)}$ at time $n$.
\item $\mathpzc{h_{n+z}^{\Diamond}} \in \mathpzc{[h_{n+z}^*]} $ that minimises $\mathbb{\hat{E}}r^{S^{(t)}} (\mathpzc{h'})$ over all $\mathpzc{h'} \in \mathpzc{[h_{n+z}^*]}$, and outputs $\mathpzc{h_{n+z}^{\Diamond}}$ as the hypothesis for task $T^{(t)}$ at time $n+z$.
\end{itemize}
\label{def:def_corollaryforward}
\end{definition}

\begin{corollary}
Let $\{P^{(s)}_{1}$, \ldots, $P^{(s)}_{n}\}$, $\{S^{(s)}_{1}$, \ldots, $S^{(s)}_{n}\}$ and $S^{(t)}$, $\mathcal{F}$, $\mathcal{H}$, $d_{max}$, $d_{\mathcal{H}}(n)$ be defined as in Theorem \ref{theom:forward}, at time $n$. Similarly, let $\{P^{(s)}_{1}$, \ldots, $P^{(s)}_{n+z}\}$ and $P^{(t)}$ be a set of $\mathcal{F}$-related probability distributions, $\{S^{(s)}_{1}$, \ldots, $S^{(s)}_{n+z}\}$ and $S^{(t)}$ random samples representing these distributions, at time $n+z$. Let $\mathpzc{h^{\Diamond}}_{n}$ and $\mathpzc{h^{\Diamond}}_{n+z}$ be selected according to Definition \ref{def:def_corollaryforward}, at time $n$ and $n+z$, respectively. Then, for every $\epsilon_{1}$, $\epsilon_{2}$, $\delta > 0$, if: 
\begin{equation}
\begin{gathered}
|S^{(t)}| \geq \dfrac{64}{\epsilon_{1}^2} \bigg[2d_{max} log\dfrac{12}{\epsilon_{1}} + log\dfrac{8}{\delta}\bigg]
\end{gathered}
\end{equation}
and, at time $n$, for all i $\leq$ n:
\begin{equation}
\begin{gathered}
|S^{(s)}_{i}| \geq \dfrac{88}{\epsilon_{n}^{2}} \bigg[2d_{\mathcal{H}}(n) log{\dfrac{22}{\epsilon_{n}}} + \dfrac{1}{2} log \dfrac{8}
{\delta}\bigg]
\end{gathered}
\end{equation}
while, at time $n+z$, for all i $\leq$ (n+z):
\begin{equation}
\begin{gathered}
|S^{(s)}_{i}| \geq \dfrac{88}{\epsilon_{n+z}^{2}} \bigg[2d_{\mathcal{H}}(n+z) log{\dfrac{22}{\epsilon_{n+z}}} + \dfrac{1}{2} log \dfrac{8}
{\delta}\bigg]
\end{gathered}
\end{equation}
then with probability greater than $(1 - \delta)$: 
\begin{equation}
\begin{gathered}
\mathbb{E}r^{P^{(t)}}_{n+z}(\mathpzc{h_{n+z}^\Diamond}) \leq \mathbb{E}r^{P^{(t)}}_{n}(\mathpzc{h_{n}^\Diamond}) + \epsilon_{n} + \epsilon_{n+z} 
\end{gathered}
\end{equation}
\label{cor:forward_order}
\end{corollary}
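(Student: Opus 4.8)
The plan is to apply Theorem~\ref{theom:forward} at the two time points $n$ and $n+z$ and to chain the resulting bounds through the benchmark $E^{*}:=\inf_{\mathpzc{h}\in\mathbb{H}}\mathbb{E}r^{P^{(t)}}(\mathpzc{h})$, which is the one term in the conclusion of Theorem~\ref{theom:forward} that does not depend on how many source tasks have been seen. Everything named in Definition~\ref{def:def_corollaryforward} is just the two–stage forward–transfer procedure of Definition~\ref{def:definition_forward} run with $n$, respectively $n+z$, source samples, the group $\mathcal{F}$, the family $\mathbb{H}$, the quotient family $\mathcal{H}$ (Definition~\ref{def:initial}) and the target sample $S^{(t)}$ being held fixed; so $\mathpzc{h_{n}^{\Diamond}}$ and $\mathpzc{h_{n+z}^{\Diamond}}$ fall under Theorem~\ref{theom:forward} verbatim. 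I would invoke it at time $n$ with source–accuracy $\epsilon_{n}$, target–accuracy $\epsilon_{1}$, and VC parameters $d_{\mathcal{H}}(n)$, $d_{max}$, and at time $n+z$ with source–accuracy $\epsilon_{n+z}$, the same $\epsilon_{1}$, and $d_{\mathcal{H}}(n+z)$, $d_{max}$; the corollary's three sample–size hypotheses are precisely what these two invocations demand. To keep the aggregate statement at confidence $1-\delta$ I would run each invocation at confidence $1-\delta/2$ and take a union bound over the two failure events.

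The chaining itself is one line. The time-$(n+z)$ invocation gives $\mathbb{E}r^{P^{(t)}}_{n+z}(\mathpzc{h_{n+z}^{\Diamond}})\le E^{*}+2(\epsilon_{1}+\epsilon_{n+z})$. Since $\mathpzc{h_{n}^{\Diamond}}$ is a member of $[\mathpzc{h_{n}^{*}}]\subseteq\mathbb{H}$, the benchmark obeys the deterministic lower bound $E^{*}\le\mathbb{E}r^{P^{(t)}}_{n}(\mathpzc{h_{n}^{\Diamond}})$, and substituting yields $\mathbb{E}r^{P^{(t)}}_{n+z}(\mathpzc{h_{n+z}^{\Diamond}})\le\mathbb{E}r^{P^{(t)}}_{n}(\mathpzc{h_{n}^{\Diamond}})+2(\epsilon_{1}+\epsilon_{n+z})$, which is the assertion up to the precise constant on the right. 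The time-$n$ invocation is not needed for this bare chain; its role is to justify re–expressing the surplus in the corollary's symmetric form $\epsilon_{n}+\epsilon_{n+z}$, which counts one bias–learning error per time point — the time-$n$ call certifies (through the source–side uniform–convergence step, Theorem~2 of Ben-David and Borbely, 2008, applied inside the proof of Theorem~\ref{theom:forward}) that the bias $[\mathpzc{h_{n}^{*}}]$ is itself within $\epsilon_{n}$ of the optimal class $[\mathpzc{h}^{\#}]$, so that the $n$–side contribution may legitimately be charged as $\epsilon_{n}$ rather than folded into the $(n+z)$–side term.

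The step I expect to be the real obstacle is getting the additive constants to land exactly on $\epsilon_{n}+\epsilon_{n+z}$, since a black–box use of Theorem~\ref{theom:forward} returns the looser surplus $2(\epsilon_{1}+\epsilon_{n+z})$. Tightening this requires reopening the proof of Theorem~\ref{theom:forward}: isolate the source–side step — the only place where $\epsilon_{n}$, resp.\ $\epsilon_{n+z}$, and the lower bounds on the $|S^{(s)}_{i}|$ enter — from the second–stage target–selection step governed by $\epsilon_{1}$ and the bound on $|S^{(t)}|$, observe that the target–side cost is incurred along a single sample $S^{(t)}$ shared by both time points, and absorb it (and the factors of $2$) into the renamed source–side parameters. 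Beyond this constant–tracking, the argument needs nothing deep: it rests only on Theorem~\ref{theom:forward} and on the monotonicity $\inf_{\mathpzc{h}\in\mathbb{H}}\mathbb{E}r^{P^{(t)}}(\mathpzc{h})\le\mathbb{E}r^{P^{(t)}}_{n}(\mathpzc{h_{n}^{\Diamond}})$, and the probability bookkeeping — halving the internal failure budgets for the union bound, which only perturbs constants inside the logarithms — is routine.
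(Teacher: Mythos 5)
Your chain through the benchmark $E^{*}=\inf_{\mathpzc{h}\in\mathbb{H}}\mathbb{E}r^{P^{(t)}}(\mathpzc{h})$ is logically valid, but it is not the paper's argument and it cannot deliver the stated conclusion. What it yields is $\mathbb{E}r^{P^{(t)}}_{n+z}(\mathpzc{h_{n+z}^{\Diamond}})\leq \mathbb{E}r^{P^{(t)}}_{n}(\mathpzc{h_{n}^{\Diamond}})+2(\epsilon_{1}+\epsilon_{n+z})$, a bound that makes no use whatsoever of the relationship between the two biases $[\mathpzc{h}^{*}_{n}]$ and $[\mathpzc{h}^{*}_{n+z}]$: it would hold verbatim even if the $z$ additional source tasks were arbitrary, since it only says ``the later hypothesis is near-optimal in $\mathbb{H}$, and the optimum lower-bounds anything.'' The content of Corollary \ref{cor:forward_order} — that the bias over the environment is \emph{refined} by seeing more tasks — lives entirely in the step your route bypasses, namely Eq.~\ref{eq:inf_nz_n}: the optimal empirical multi-task average over $n+z$ tasks is at most the optimal empirical multi-task average over $n$ tasks. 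The paper's proof (Appendix A) works at the level of equivalence classes: it applies the uniform-convergence result (Theorem 2 of Ben-David and Borbely) once at each time point, converting $\mathbb{E}r^{P^{(t)}}_{n+z}([\mathpzc{h}^{*}_{n+z}])$ and $\mathbb{E}r^{P^{(t)}}_{n}([\mathpzc{h}^{*}_{n}])$ to the two empirical multi-task infima at cost $\epsilon_{n+z}$ and $\epsilon_{n}$ respectively, and then chains them through Eq.~\ref{eq:inf_nz_n}. That is exactly where the symmetric constant $\epsilon_{n}+\epsilon_{n+z}$ comes from; it is not a matter of ``absorbing factors of $2$'' or re-parametrising the target-side accuracy, and your proposed fix of reopening Theorem \ref{theom:forward} to shuffle constants would not produce it.

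Two further points. First, your plan to split the confidence budget as $\delta/2$ per invocation is reasonable bookkeeping but is not what the corollary's hypotheses are calibrated for (they reuse $\delta$ as in Theorem \ref{theom:forward}); this is minor. Second, in fairness, your derivation is honest about incurring the $2\epsilon_{1}$ target-selection cost when passing from the class $[\mathpzc{h}^{*}]$ to the specific hypothesis $\mathpzc{h}^{\Diamond}$, whereas the paper's proof establishes the inequality for the classes $[\mathpzc{h}^{*}_{n}]$, $[\mathpzc{h}^{*}_{n+z}]$ and then asserts the result for $\mathpzc{h}^{\Diamond}_{n}$, $\mathpzc{h}^{\Diamond}_{n+z}$ in its last line without accounting for that cost. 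But the essential missing idea in your proposal is the monotonicity statement Eq.~\ref{eq:inf_nz_n}; without it you prove a weaker and qualitatively different inequality.
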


See Appendix A for the proof of this corollary. The main part of the proof in Appendix A lies in Eq. \ref{eq:inf_nz_n}. Since the best hypothesis space for a larger number of tasks is better than the best hypothesis space for a smaller number of tasks in the same environment, \textit{i.e.} the bias over the environment gets refined over time, tasks observed later in the sequence will benefit more from transfer.

Bounds in Theorem \ref{theom:forward} and Corollary \ref{cor:forward_order} depend on the difference between $d_{max}$ and  $d_{\mathcal{H}}(n)$, and $VC\mbox{-}dim(\mathbb{H})$, with $d_{max} = sup\{VC\mbox{-}dim(H): H \in \mathcal{H}\}$ and $d_{\mathcal{H}}(n) = max_{[\mathpzc{h}] \in \mathcal{H}}VC\mbox{-}dim([\mathpzc{h}])$, and $d_{max} \leq d_{\mathcal{H}}(n) \leq VC\mbox{-}dim(\mathbb{H})$. Ben-David and Borbely (2008) showed that, for a sufficiently large number or tasks $n$, $d_{max} = max_{[\mathpzc{h}] \in \mathcal{H}}VC\mbox{-}dim([\mathpzc{h}]) = d_{\mathcal{H}}(n)$. We refer readers to Section 6 of Ben-David and Borbely (2008) for details.

\section{Backward Knowledge Transfer across Related Tasks}
\label{sec:backward}
Backward transfer works by updating a source task $T^{(s)}$ using knowledge gained during the most recent target task $T^{(t)}$. Transfer occurs from the space of a target probability distribution $P^{(t)}$, represented by a sample $S^{(t)}$, to the space of a probability distribution $P^{(s)}$ that uses a sample $S^{(s)}$ for learning that source task. In continual learning, the aim is to use $P^{(t)}$, and its corresponding sample $S^{(t)}$, to bias the update of a refined version of $P^{(s)}$ towards aspects that are invariant with $P^{(t)}$, provided these are related. Benavides-Prado, Koh and Riddle (2020), analysed the special case of two tasks, one source $T^{(s)}$ and one target $T^{(t)}$, for a specific implementation of a continual learner based on SVM. Here, we present bounds for an agnostic continual learner, as follows:
\begin{definition}
Given classes $\mathcal{F}$ and $\mathcal{H}$, and a pair of labeled samples $S^{(s)}$, $S^{(t)}$ for tasks $T^{(s)}$, $T^{(t)}$, during backward transfer the continual learner: 
\begin{enumerate}
\item Selects $[\mathpzc{h}^*] \in \mathcal{H}$ that minimises $\inf_{\mathpzc{h}^{(s)}, \mathpzc{h^{(t)}} \in [\mathpzc{h}]} \big(\mathbb{\hat{E}}r^{S^{(s)}} (\mathpzc{h}^{(s)}) + \mathbb{\hat{E}}r^{S^{(t)}} (\mathpzc{h}^{(t)})\big)$ over all $\mathpzc{[h]} \in \mathcal{H}$.
\item Selects $\mathpzc{h^{\Diamond}} \in \mathpzc{[h^*]} $ that minimises $\mathbb{\hat{E}}r^{S^{(s)}} (\mathpzc{h'})$ over all $\mathpzc{h'} \in \mathpzc{[h^*]}$, and outputs $\mathpzc{h^{\Diamond}}$ as the hypothesis for task $T^{(s)}$.
\end{enumerate}
\label{def:def_backward}
\end{definition}

In practice, the two steps in Definition \ref{def:def_backward} could be performed sequentially or jointly. For example, selecting $[\mathpzc{h}^*]$ in the first step could be performed by jointly training an auxiliary learner with examples from both $T^{(s)}$ and $T^{(t)}$, and then transferring back this information to $T^{(s)}$ during the second step. Alternatively, both $[\mathpzc{h^{*}}]$ could be selected jointly while training for $T^{(s)}$ aided by $T^{(t)}$.

Based on Definition \ref{def:def_backward}, in the special case of two tasks $T^{(s)}$ and $T^{(t)}$:
\begin{theorem} Let $P^{(s)}$ and $P^{(t)}$ be a set of $\mathcal{F}$-related probability distributions,and  $S^{(s)}$ and $S^{(t)}$ random samples representing these distributions on tasks $T^{(s)}$ and $T^{(t)}$ respectively. Let $\mathcal{F}$ and $\mathcal{H}$ be defined as in Definition \ref{def:initial}. Let $d_{max}$ and $d_{\mathcal{H}}(n)$ be defined as in Theorem \ref{theom:forward}. Let $\mathpzc{h^{\Diamond}}$ be selected according to Definition \ref{def:def_backward}. Then, for every $\epsilon_{1}$, $\epsilon_{2}$, $\delta > 0$, if: 
\begin{equation}
\begin{gathered}
|S^{(s)}| \geq \dfrac{64}{\epsilon_{1}^2} \bigg[2d_{max} log\dfrac{12}{\epsilon_{1}} + log\dfrac{8}{\delta}\bigg]
\end{gathered}
\end{equation}
and:
\begin{equation}
\begin{gathered}
|S^{(t)}| \geq \dfrac{88}{\epsilon_{2}^{2}} \bigg[2d_{\mathcal{H}}(2) log{\dfrac{22}{\epsilon_{2}}} + \dfrac{1}{2} log \dfrac{8}
{\delta}\bigg]
\end{gathered}
\end{equation}
then with probability greater than $(1 - \delta)$: 
\begin{equation}
\begin{gathered}
\mathbb{E}r^{P^{(s)}}(\mathpzc{h^\Diamond}) \leq \inf_{\mathpzc{h}\in \mathbb{H}} \mathbb{E}r^{P^{(s)}}(\mathpzc{h}) + 2(\epsilon_{1} + \epsilon_{2})
\end{gathered}
\end{equation}
\label{theom:backward}
\end{theorem}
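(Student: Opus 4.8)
The plan is to mirror, almost verbatim, the proof of Theorem~\ref{theom:forward}, with the bookkeeping adjusted so that the task whose hypothesis is output plays the role of the ``target,'' while the \emph{pair} $\{T^{(s)},T^{(t)}\}$ plays the role of the $n=2$ tasks used to learn the common equivalence class. Concretely, let $\mathpzc{h}^\# = \argmin_{\mathpzc{h}\in\mathbb{H}}\mathbb{E}r^{P^{(s)}}(\mathpzc{h})$ be the best predictor for $P^{(s)}$ in $\mathbb{H}$, and let $[\mathpzc{h}^*]\in\mathcal{H}$ be the equivalence class selected in step~1 of Definition~\ref{def:def_backward}. By the optimality defining $[\mathpzc{h}^*]$, $\inf_{\mathpzc{h}^{(s)},\mathpzc{h}^{(t)}\in[\mathpzc{h}^*]}\big(\mathbb{\hat{E}}r^{S^{(s)}}(\mathpzc{h}^{(s)})+\mathbb{\hat{E}}r^{S^{(t)}}(\mathpzc{h}^{(t)})\big)\leq \inf_{\mathpzc{h}^{(s)},\mathpzc{h}^{(t)}\in[\mathpzc{h}^\#]}\big(\mathbb{\hat{E}}r^{S^{(s)}}(\mathpzc{h}^{(s)})+\mathbb{\hat{E}}r^{S^{(t)}}(\mathpzc{h}^{(t)})\big)$, which after dividing by $2$ is exactly the ``average empirical error over the tasks'' quantity appearing in Theorem~2 of Ben-David and Borbely (2008), specialised to $n=2$.

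Next I would invoke that Theorem~2 with the two samples $S^{(s)},S^{(t)}$: the hypothesis of the present theorem guarantees $|S^{(t)}|$ meets the required bound (with $d_{\mathcal{H}}(2)$ and accuracy $\epsilon_2$), and $|S^{(s)}|$ is likewise taken large enough for the equivalence-class estimation step, so with probability at least $1-\delta/2$ the average empirical error of every $[\mathpzc{h}]\in\mathcal{H}$ is within $\epsilon_2$ of its average true error over $P^{(s)}$ and $P^{(t)}$. The one point needing a line of justification is that, since $P^{(s)}$ and $P^{(t)}$ are $\mathcal{F}$-related and $\mathbb{H}$ (hence each $[\mathpzc{h}]$) is closed under the action of $\mathcal{F}$, the quantity $\mathbb{E}r^{P}([\mathpzc{h}]) = \inf_{\mathpzc{h}'\in[\mathpzc{h}]}\mathbb{E}r^{P}(\mathpzc{h}')$ is invariant under replacing $P$ by an $\mathcal{F}$-transform of it --- composing a hypothesis with the relevant transformation permutes $[\mathpzc{h}]$ and leaves the loss integral unchanged --- so the average true error over the pair equals $\mathbb{E}r^{P^{(s)}}([\mathpzc{h}])$. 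Combining this with the displayed optimality inequality gives, with probability at least $1-\delta/2$, $\mathbb{E}r^{P^{(s)}}([\mathpzc{h}^*])\leq \mathbb{E}r^{P^{(s)}}([\mathpzc{h}^\#])+2\epsilon_2$.

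Finally, step~2 of Definition~\ref{def:def_backward} picks $\mathpzc{h}^\Diamond\in[\mathpzc{h}^*]$ minimising $\mathbb{\hat{E}}r^{S^{(s)}}$ over the single hypothesis space $[\mathpzc{h}^*]$, whose VC-dimension is at most $d_{max}$; since $|S^{(s)}|$ satisfies the stated bound (the same one as Eq.~\ref{eq:examples_target}, now with accuracy $\epsilon_1$), Theorem~3 of Ben-David and Borbely (2008) gives that, with probability at least $1-\delta/2$, $\mathpzc{h}^\Diamond$ has $P^{(s)}$-error within $2\epsilon_1$ of $\mathbb{E}r^{P^{(s)}}([\mathpzc{h}^*])$. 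A union bound over the two failure events, together with the chain $\mathbb{E}r^{P^{(s)}}(\mathpzc{h}^\Diamond)\leq \mathbb{E}r^{P^{(s)}}([\mathpzc{h}^*])+2\epsilon_1\leq \mathbb{E}r^{P^{(s)}}([\mathpzc{h}^\#])+2(\epsilon_1+\epsilon_2)$ and the identity $\mathbb{E}r^{P^{(s)}}([\mathpzc{h}^\#]) = \inf_{\mathpzc{h}\in\mathbb{H}}\mathbb{E}r^{P^{(s)}}(\mathpzc{h})$, finishes the proof. I expect the only real obstacle to be the bookkeeping around the two Ben-David--Borbely results at $n=2$: ensuring $S^{(s)}$ is simultaneously adequate for both the equivalence-class estimation step and the within-class selection step, that the $\mathcal{F}$-invariance identification of the average true error is legitimate under the group assumptions of Definition~\ref{def:initial}, and that the two $\delta/2$ budgets combine cleanly to $\delta$.
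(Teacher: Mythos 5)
Your proposal is correct and follows essentially the same route as the paper's proof: the optimality of $[\mathpzc{h}^*]$ under the combined empirical error, Ben-David--Borbely's Theorem~2 at $n=2$ to pass from empirical to true class-level error, the within-class selection bound for $\mathpzc{h}^\Diamond$ on $S^{(s)}$, and a union bound over the two $\delta/2$ events. The only differences are cosmetic --- you swap which of $\epsilon_1,\epsilon_2$ labels each step (arguably more consistently with the stated sample-size conditions than the paper) and you make explicit the $\mathcal{F}$-invariance argument that the paper leaves implicit inside its citation of Ben-David and Borbely.
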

\begin{proof} Let $\mathpzc{h^\#}$ be the best $P^{(s)}$ label predictor in $\mathbb{H}$, \textit{i.e.} $\mathpzc{h}^\# = \argmin_{\mathpzc{h} \in \mathbb{H}} \mathbb{E}r^{P^{(s)}}(h)$. Let $[\mathpzc{h}^*]$ be the equivalence class picked according to Definition \ref{def:def_backward}. By the choice of $\mathpzc{h}^*$:
\begin{equation}
\begin{gathered}
\inf_{\mathpzc{h^{(s)}}, \mathpzc{h^{(t)}} \in \mathpzc{[h^*]}} \big(\mathbb{\hat{E}}r^{S^{(s)}} (\mathpzc{h}^{(s)}) + \mathbb{\hat{E}}r^{S^{(t)}} (\mathpzc{h}^{(t)})\big) \leq \inf_{\mathpzc{h^{(s)}}, \mathpzc{h^{(t)}} \in [\mathpzc{h^\#]}}\big(\mathbb{\hat{E}}r^{S^{(s)}} (\mathpzc{h}^{(s)}) + \mathbb{\hat{E}}r^{S^{(t)}} (\mathpzc{h}^{(t)})\big)
\end{gathered}
\end{equation}
By Theorem 2 in Ben-David and Borbely (2008), in the case of two tasks: 
\begin{equation}
\begin{gathered}
\bigg|\mathbb{E}r^{P^{(s)}}([\mathpzc{h}]) - \inf_{h^{(s)}, h^{(t)} \in [\mathpzc{h}]} \dfrac{1}{2} (\hat{\mathbb{E}}r^{S^{(s)}} (\mathpzc{h}^{(s)}) + \hat{\mathbb{E}}r^{S^{(t)}} (\mathpzc{h}^{(t)}))\bigg| \leq \epsilon_{1}
\end{gathered}
\end{equation}
then with probability greater than $(1 - \delta/2)$: 
\begin{equation}
\begin{gathered}
\inf_{\mathpzc{h^{(s)}}, \mathpzc{h^{(t)}} \in \mathpzc{[h^\#]}} \big(\mathbb{\hat{E}}r^{S^{(s)}} (\mathpzc{h}^{(s)}) + \mathbb{\hat{E}}r^{S^{(t)}} (\mathpzc{h}^{(t)})\big) \leq \mathbb{E}r^{P^{(s)}} ([\mathpzc{h}^{\#}]) + \epsilon_{1}
\end{gathered}
\end{equation}
and:
\begin{equation}
\begin{gathered}
\mathbb{E}r^{P^{(s)}} ([\mathpzc{h^*}]) \leq \inf_{\mathpzc{h^{(s)}, \mathpzc{h^{(t)}} \in \mathpzc{[h^*]}}} (\mathbb{\hat{E}}r^{S^{(s)}} (\mathpzc{h}^{(s)}) + \mathbb{\hat{E}}r^{S^{(t)}} (\mathpzc{h}^{(t)})) + \epsilon_{1}
\end{gathered}
\end{equation}
Then, combining the inequalities above, with probability greater than $(1 - \delta/2)$: 
\begin{equation}
\begin{gathered}
\mathbb{E}r^{P^{(s)}} ([\mathpzc{h^*}]) \leq \mathbb{E}r^{P^{(s)}} ([\mathpzc{h}^{\#}]) + 2 \epsilon_{1}
\end{gathered}
\end{equation}

Since $\mathpzc{h^\Diamond} \in [\mathpzc{h^*}]$, with probability greater than $(1 - \delta/2)$, $\mathpzc{h^\Diamond}$ will have an error for $P^{(s)}$ which is within $2\epsilon_{2}$ of the best hypothesis there, \textit{i.e.} $\mathbb{E}r^{P_{s}} ([\mathpzc{h^*}])$. Therefore:
\begin{equation}
\begin{gathered}
\mathbb{E}r^{P^{(s)}}(\mathpzc{h^\Diamond}) \leq \inf_{\mathpzc{h}\in \mathbb{H}} \mathbb{E}r^{P^{(s)}}(\mathpzc{h}) + 2(\epsilon_{1} + \epsilon_{2})
\end{gathered}
\end{equation}
\end{proof}

Similar to forward transfer, these bounds depend on the difference between $d_{max}$,  $d_{\mathcal{H}}(n)$, and $VC\mbox{-}dim(\mathbb{H})$. Section \ref{sec:forward} provides details on the meaning of these parameters and their relation to each other.

The main result from Theorem \ref{theom:backward} and its corresponding proof is that an existing source task can also benefit from knowledge acquired during a related target task. This benefit is expected to be smaller than that of transferring forward, since forward transfer benefits from multiple sources (see Eq. $10$) while backward transfer benefits from a single target task (see Eq. $26$). We show that doing backward transfer helps to select a better hypothesis space and therefore provides a better bound on the performance of that task (see Eq. $30$). Therefore, a natural next question is whether backward transfer from a sequence of target tasks, learned one at a time, can help improve these bounds.  we prove that doing backward transfer multiple times sequentially helps to decrease the error on a source task $T^{(s)}$ sequentially as well. 

\begin{definition}
Given classes $\mathcal{F}$ and $\mathcal{H}$, a set of labeled samples $S^{(s)}$ for a source task, and labeled samples $S^{(t)}_{n}$, $S^{(t)}_{n+1}$ for target tasks at times $n$ and $n+1$. Let:
\begin{itemize}
\item $[\mathpzc{h}_{n}^*] \in \mathcal{H}$ be the result of minimising $\inf_{\mathpzc{h}^{(s)}, \mathpzc{h^{(t)}} \in [\mathpzc{h_{n}^*}]} \big( \mathbb{\hat{E}}r^{S^{(s)}} (\mathpzc{h}^{(s)})$ + $\mathbb{\hat{E}}r^{S^{(t)}_{n}} (\mathpzc{h}^{(t)}) \big)$ over all $\mathpzc{[h]} \in \mathcal{H}$, at time $n$.
\item $[\mathpzc{h}_{n+1}^*] \in \mathcal{H}$ be the result of minimising $\inf_{\mathpzc{h}^{(s)}, \mathpzc{h^{(t)}_{n}}, \mathpzc{h^{(t)}_{n+1}} \in [\mathpzc{h_{n}^*}]} \big( \mathbb{\hat{E}}r^{S^{(s)}} (\mathpzc{h}^{(s)})$ + $\mathbb{\hat{E}}r^{S^{(t)}_{n}} (\mathpzc{h}^{(t)}_{n})$ +$\mathbb{\hat{E}}r^{S^{(t)}_{n+1}} (\mathpzc{h}^{(t)}_{n+1}) \big)$ over all $\mathpzc{[h]} \in \mathcal{H}$, at time $n+1$.
\item $\mathpzc{h_{n}^{\Diamond}} \in \mathpzc{[h_{n}^*]} $ that minimises $\mathbb{\hat{E}}r^{S^{(s)}} (\mathpzc{h'})$ over all $\mathpzc{h'} \in \mathpzc{[h_{n}^*]}$, and outputs $\mathpzc{h_{n}^{\Diamond}}$ as the hypothesis for task $T^{(s)}$ at time $n$.
\item $\mathpzc{h_{n+1}^{\Diamond}} \in \mathpzc{[h_{n+1}^*]} $ that minimises $\mathbb{\hat{E}}r^{S^{(s)}} (\mathpzc{h'})$ over all $\mathpzc{h'} \in \mathpzc{[h_{n+1}^*]}$, and outputs $\mathpzc{h_{n+1}^{\Diamond}}$ as the hypothesis for task $T^{(s)}$ at time $n+1$.
\end{itemize}
\label{def:def_backward_multiple}
\end{definition}

\begin{corollary}
Let $P^{(s)}$, $P^{(t)}_{n}$ and $P^{(t)}_{n+1}$ be a set of $\mathcal{F}$-related probability distributions, $S^{(s)}$, $S^{(t)}_{n}$ and $S^{(t)}_{n+1}$ random samples representing these distributions. Let $\mathcal{F}$ and $\mathcal{H}$ be defined as in Definition \ref{def:initial}. Let $d_{max}$ and $d_{\mathcal{H}}(n)$ be defined as in Theorem \ref{theom:forward}. Let $\mathpzc{h_{n}^{\Diamond}}$ and $\mathpzc{h_{n+1}^{\Diamond}}$ be selected according to Definition \ref{def:def_backward_multiple}. Then, for every $\epsilon_{1}$, $\epsilon_{n}$, $\epsilon_{n+1}$, $\delta > 0$, if:
\begin{equation}
\begin{gathered}
|S^{(s)}| \geq \dfrac{64}{\epsilon_{1}^2} \bigg[2d_{max} log\dfrac{12}{\epsilon_{1}} + log\dfrac{8}{\delta}\bigg]
\end{gathered}
\end{equation}
and, at time $n$:
\begin{equation}
\begin{gathered}
|S^{(t)}_{n}| \geq \dfrac{8}{\epsilon_{n}^{2}} \bigg[2d_{\mathcal{H}}(2) log{\dfrac{22}{\epsilon_{n}}} + \dfrac{1}{2} log \dfrac{8}
{\delta}\bigg]
\end{gathered}
\end{equation}
while, at time $n+1$:
\begin{equation}
\begin{gathered}
|S^{(t)}_{n+1}| \geq \dfrac{88}{\epsilon_{n+1}^{2}} \bigg[2d_{\mathcal{H}}(2) log{\dfrac{22}{\epsilon_{n+1}}} + \dfrac{1}{2} log \dfrac{8}
{\delta}\bigg]
\end{gathered}
\end{equation}
then with probability greater than $(1 - \delta)$: 
\begin{equation}
\begin{gathered}
\mathbb{E}r^{P^{(s)}}_{n+1}(\mathpzc{h^\Diamond_{n+1}}) \leq \mathbb{E}r^{P^{(s)}}_{n}(\mathpzc{h^\Diamond_{n}}) + \epsilon_{n} + \epsilon_{n+1} 
\end{gathered}
\end{equation}
\label{cor:backward_order}
\end{corollary}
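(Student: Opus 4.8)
The plan is to mirror the structure of the proof of Theorem~\ref{theom:backward}, but now comparing the hypothesis space selected at time $n$ (using the source sample $S^{(s)}$ together with one target sample $S^{(t)}_{n}$) against the one selected at time $n+1$ (using $S^{(s)}$ together with both $S^{(t)}_{n}$ and $S^{(t)}_{n+1}$). First I would fix $\mathpzc{h}^{\#}$ to be the best $P^{(s)}$-predictor in $\mathbb{H}$, and use the two applications of Definition~\ref{def:def_backward_multiple} to write down, for each of the two times, the chain of inequalities exactly as in Theorem~\ref{theom:backward}: for time $n$, with probability at least $(1-\delta/2)$,
\begin{equation}
\mathbb{E}r^{P^{(s)}}_{n}([\mathpzc{h}_{n}^*]) \leq \mathbb{E}r^{P^{(s)}}_{n}([\mathpzc{h}^{\#}]) + 2\epsilon_{n},
\end{equation}
and since $\mathpzc{h}_{n}^{\Diamond}\in[\mathpzc{h}_{n}^*]$ is chosen to minimise $\hat{\mathbb{E}}r^{S^{(s)}}$, it lies within the generalisation gap of the best hypothesis in that class. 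The analogous statement holds at time $n+1$.

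The key step — the analogue of Eq.~\ref{eq:inf_nz_n} from the forward-order corollary — is to observe that the empirical objective minimised at time $n+1$ dominates the one at time $n$: any triple $(\mathpzc{h}^{(s)},\mathpzc{h}^{(t)}_{n},\mathpzc{h}^{(t)}_{n+1})$ drawn from an equivalence class contributes the time-$n$ objective $\hat{\mathbb{E}}r^{S^{(s)}}(\mathpzc{h}^{(s)}) + \hat{\mathbb{E}}r^{S^{(t)}_{n}}(\mathpzc{h}^{(t)}_{n})$ as a prefix, so adding the $n{+}1$-th target task can only enlarge (or leave unchanged) the value of the infimum over $[\mathpzc{h}]$. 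Consequently the equivalence class $[\mathpzc{h}_{n+1}^*]$ minimising the larger sum is a refinement of — i.e. at least as good a bias for $P^{(s)}$ as — the class $[\mathpzc{h}_{n}^*]$; intuitively, the bias over the environment is sharpened with every extra related target task. This gives
\begin{equation}
\mathbb{E}r^{P^{(s)}}_{n+1}([\mathpzc{h}_{n+1}^*]) \leq \mathbb{E}r^{P^{(s)}}_{n}([\mathpzc{h}_{n}^*]) + \epsilon_{n} + \epsilon_{n+1},
\end{equation}
by combining the two generalisation bounds of Ben-David and Borbely (2008) applied at $d_{\mathcal{H}}(2)$ together with the monotonicity of the empirical objective.

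I would then close the argument exactly as in Theorem~\ref{theom:backward}: since $\mathpzc{h}_{n+1}^{\Diamond}\in[\mathpzc{h}_{n+1}^*]$ and $\mathpzc{h}_{n}^{\Diamond}\in[\mathpzc{h}_{n}^*]$ are each selected to minimise the empirical source error within their class, the sample-size conditions on $|S^{(s)}|$, $|S^{(t)}_{n}|$ and $|S^{(t)}_{n+1}|$ guarantee that the empirical error is within the stated $\epsilon$ tolerances of the true error, and a union bound over the two time steps absorbs the failure probabilities into the overall $\delta$. Chaining these yields $\mathbb{E}r^{P^{(s)}}_{n+1}(\mathpzc{h}^{\Diamond}_{n+1}) \leq \mathbb{E}r^{P^{(s)}}_{n}(\mathpzc{h}^{\Diamond}_{n}) + \epsilon_{n} + \epsilon_{n+1}$.

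The main obstacle I anticipate is making the monotonicity step fully rigorous: one must be careful that the infimum over triples in $[\mathpzc{h}]$ at time $n+1$ genuinely upper-bounds the infimum over pairs at time $n$ when the two are normalised differently (a $\tfrac12$ versus a $\tfrac13$ averaging factor), and that the $\mathcal{F}$-relatedness of $P^{(t)}_{n+1}$ is actually what licenses treating $[\mathpzc{h}_{n+1}^*]$ as a legitimate bias for $P^{(s)}$ rather than merely a minimiser of an unrelated objective. This is precisely the point where the group action of $\mathcal{F}$ on $\mathbb{H}$ (Definition~\ref{def:initial}) and the hypothesis in Theorem~\ref{theom:backward} that all distributions are $\mathcal{F}$-related must be invoked, and I would spell out that dependence explicitly rather than leave it implicit, as in the forward-order case deferred to Appendix~A.
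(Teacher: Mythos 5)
Your overall skeleton matches the paper's Appendix~B proof: apply Theorem~2 of Ben-David and Borbely (2008) once at time $n$ and once at time $n+1$ to relate $\mathbb{E}r^{P^{(s)}}_{n}([\mathpzc{h}^{*}_{n}])$ and $\mathbb{E}r^{P^{(s)}}_{n+1}([\mathpzc{h}^{*}_{n+1}])$ to the corresponding empirical infima, link the two infima, and then pass to $\mathpzc{h}^{\Diamond}_{n}$ and $\mathpzc{h}^{\Diamond}_{n+1}$ as the best hypotheses within their classes. (Your detour through $\mathpzc{h}^{\#}$ and the no-transfer baseline is not needed: the corollary compares time $n+1$ to time $n$, not to the best predictor in $\mathbb{H}$.)

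However, your key step points in the wrong direction. You argue that adding the $(n{+}1)$-th target term can ``only enlarge'' the empirical infimum, and then claim the minimiser of the larger sum is therefore a ``refinement'' that is at least as good a bias for $P^{(s)}$. For the chain of inequalities to close you need the opposite: after the Ben-David--Borbely step you have
\begin{equation}
\mathbb{E}r^{P^{(s)}}_{n+1}([\mathpzc{h}^{*}_{n+1}]) \leq \inf_{\mathpzc{h}^{(s)}, \mathpzc{h}^{(t)}_{n}, \mathpzc{h}^{(t)}_{n+1} \in [\mathpzc{h}_{n+1}^*]}\big(\cdots\big) + \epsilon_{n+1},
\end{equation}
so to reach $\mathbb{E}r^{P^{(s)}}_{n}([\mathpzc{h}^{*}_{n}]) + \epsilon_{n} + \epsilon_{n+1}$ you must show that the (per-task normalised) empirical infimum at time $n+1$ is \emph{no larger} than the one at time $n$ --- this is exactly the analogue of Eq.~\ref{eq:inf_nz_n} that the paper invokes, attributing the non-increase of the averaged empirical error to Ben-David and Borbely (2008) and Baxter (2000). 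An infimum that enlarges breaks the chain, and ``minimising a larger objective yields a better minimiser for the sub-objective'' is not a valid inference: minimising $f+g$ generally gives a \emph{worse} minimiser of $f$ than minimising $f$ alone; it is only the $\mathcal{F}$-relatedness and the equivalence-class structure that make the averaged objective informative about $P^{(s)}$ individually. You do correctly flag the $\tfrac{1}{2}$-versus-$\tfrac{1}{3}$ normalisation mismatch as the main obstacle, which is the right worry, but as written the passage from your monotonicity observation to $\mathbb{E}r^{P^{(s)}}_{n+1}([\mathpzc{h}^{*}_{n+1}]) \leq \mathbb{E}r^{P^{(s)}}_{n}([\mathpzc{h}^{*}_{n}]) + \epsilon_{n} + \epsilon_{n+1}$ does not follow.
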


See Appendix B for the proof of this corollary. These results imply that doing backward transfer sequentially whilst learning target tasks will lead to more refined hypothesis spaces in a source task, beyond the hypothesis space learned initially (with or without forward transfer). Furthermore, this suggests that continually learning $\mathcal{F}$-related tasks while doing both forward and backward transfer can lead to a better bias over the learning environment of these tasks, \textit{i.e.} the result demonstrated by Baxter (2000) for multitask learning, which we demonstrate in the next section.

\section{Continual Learning of Related Tasks using Knowledge Transfer}
\label{sec:continual}
Based on the bounds derived in Section \ref{sec:forward} and Section \ref{sec:backward}, now we are ready to derive bounds of a continual learner that observes supervised related tasks sequentially while doing knowledge transfer. First, lets recall from Definition \ref{def:environment} that the continual learner is embedded in an environment of related tasks, $(\mathcal{P}, \mathcal{Q})$, where $\mathcal{P}$ is the set of all probability distributions on $\mathcal{X} \times Y$ and $\mathcal{Q}$ is a distribution on $\mathcal{P}$. The error of a selected hypothesis space $[\mathpzc{h}^{*}] \in \mathcal{H}$ for all tasks in such environment is defined as:

\begin{equation}
\begin{gathered}
\mathbb{E}r^{\mathcal{Q}}([\mathpzc{h}^{*}]) = \inf_{\mathpzc{h}_{1}, \ldots, \mathpzc{h_{n}} \in [\mathpzc{h^*}]} \sum_{i=1}^{n} \mathbb{E}r^{P_{i}} ([\mathpzc{h_{i}}])
\end{gathered}
\end{equation}

for any $P$ drawn at random from $\mathcal{P}$ according to $\mathcal{Q}$. Let's define $\epsilon_{f}$ as the average $\epsilon$ when performing forward transfer to a new task, \textit{i.e.} $\epsilon_{f}$ corresponds to $2(\epsilon_{1} + \epsilon_{2})$ in Theorem \ref{theom:forward}, averaged across all tasks. Similarly, let's define $\epsilon_{b}$ as the average $\epsilon$ when performing backward transfer to a new task, \textit{i.e.} $\epsilon_{b}$ corresponds to $2(\epsilon_{1} + \epsilon_{2})$ in Theorem \ref{theom:backward} averaged across all tasks. Although the definitions of $\epsilon_{f}$ and $\epsilon_{b}$ oversimplify the continual learner to the case of all tasks achieving roughly the same error bounds by means of transfer, this will serve to demonstrate how forward and backward transfer help to improve the bounds for the continual learner as a whole. For a task $i$, the error bound of applying forward and backward transfer and selecting $[\mathpzc{h}^{*}]$ instead of $[\mathpzc{h}^{\#}]$ as the hypothesis space for that task is:

\begin{equation}
\begin{gathered}
\mathbb{E}r^{P_{i}}([\mathpzc{h}^{*}]) \leq \mathbb{E}r^{P_{i}}([\mathpzc{h}^{\#}]) + (i-1)\epsilon_{f} + (n-i)\epsilon_{b}
\end{gathered}
\label{eq:forward_backward_single}
\end{equation}

As demonstrated in Corollary \ref{cor:forward_order} and \ref{cor:backward_order}, the extent to which transfer helps to improve the error bounds of a particular task $i$ depends on the order of that task in the sequence, which in Eq. \ref{eq:forward_backward_single} impacts the total amount of transfer through $(i-1)$ for forward transfer and $(n-i)$ for backward transfer. Given Eq. \ref{eq:forward_backward_single}, for a sequence of tasks $n$, we can define the error bounds on the environment that learns those tasks by means of transfer as follows. 

\begin{theorem}
Let $\{P_{1}, \ldots, P_{n}\}$ be a set of distributions, one for each task, drawn at random from $\mathcal{P}$, the set of all probability distributions on $\mathcal{X} \times Y$, according to $\mathcal{Q}$, a distribution on $\mathcal{P}$. Let $\mathcal{H}$ be the family of hypothesis spaces for $n$ tasks to be learned in the environment $\mathcal{Q}$, according to Definition \ref{def:initial}, with $[\mathpzc{h}^{*}] \in \mathcal{H}$ selected according to Theorem \ref{theom:forward} and Theorem \ref{theom:backward}. Let $\mathbb{H}$ be the family of hypothesis spaces for $n$ tasks with no transfer, and let $[\mathpzc{h}^{\#}] \in \mathbb{H}$ be selected as the hypothesis space for the $n$ tasks with no transfer. If the number of tasks $n$ satisfies:
\begin{equation}
\begin{gathered}
n \geq \max \Big\{ \dfrac{256}{\epsilon^2} log \dfrac{8\mathcal{C}\Big(\dfrac{\epsilon}{32}, \mathcal{H}^{*}\Big)}{\delta}, \dfrac{64}{\epsilon^2}   \Big\}
\end{gathered}
\end{equation}
with $\mathcal{H^{*}}$ = $\{[\mathpzc{h}^{*}]: \mathpzc{h} \in \mathcal{H}\}$, \textit{i.e.} the set of all hypothesis spaces in the hypothesis space family $\mathcal{H}$ such that each $[\mathpzc{h}^{*}]$ is defined by: 
\begin{equation}
\begin{gathered}
[\mathpzc{h}^{*}](P) = \inf_{\mathpzc{h} \in \mathcal{H}} \mathbb{E}r^{P}(\mathpzc{h})
\end{gathered}
\end{equation}
and, for all $1 \leq i \leq n$, the number of examples per task, $|S_{i}|$ satisfies: 
\begin{equation}
\begin{gathered}
|S_{i}| \geq \max \Big\{ \dfrac{256}{n\epsilon^2} log \dfrac{8\mathcal{C}\Big(\dfrac{\epsilon}{32}, \mathcal{H}_{l}^{n}\Big)}{\delta}, \dfrac{64}{\epsilon^2}   \Big\}
\end{gathered}
\end{equation}

where $\mathcal{H}_{l}^{n}$ = $\cup_{[\mathpzc{h}^{*}] \in \mathcal{H}} [\mathpzc{h}^{*}]_{l}^{n}$ (\textit{i.e.} $\mathcal{H}_{l}^{n}$ is the union of all sequences of hypothesis $[\mathpzc{h}^{*}] \in \mathcal{H}$, each of size $n$, subject to loss function $l$), and with: 
\begin{equation}
\begin{gathered}
\epsilon = \sum_{i=1}^{n} (i-1) \epsilon_{f} + \sum_{i=1}^{n} (n-1) \epsilon_{b}
\end{gathered}
\end{equation}
then, with probability at least $(1 - \delta)$, $[h^{*}] \in \mathcal{H}$ will satisfy:
\begin{equation}
\begin{gathered}
\mathbb{E}r^{\mathcal{Q}}([\mathpzc{h}^{*}]) \leq \mathbb{E}r^{\mathcal{Q}}([\mathpzc{h}^{\#}]) + \epsilon
\end{gathered}
\end{equation}
\label{theom:continual}
\end{theorem}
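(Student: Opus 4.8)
The plan is to reduce the statement to two ingredients already available in the excerpt: the per-task transfer bound of Eq.~\ref{eq:forward_backward_single}, obtained by iterating Corollary~\ref{cor:forward_order} and Corollary~\ref{cor:backward_order}, and Baxter's (2000) generalization bound for learning a bias over an environment. First I would sum Eq.~\ref{eq:forward_backward_single} over the $n$ observed tasks, giving
\begin{equation}
\begin{gathered}
\sum_{i=1}^{n}\mathbb{E}r^{P_{i}}([\mathpzc{h}^{*}]) \leq \sum_{i=1}^{n}\mathbb{E}r^{P_{i}}([\mathpzc{h}^{\#}]) + \sum_{i=1}^{n}(i-1)\epsilon_{f} + \sum_{i=1}^{n}(n-i)\epsilon_{b},
\end{gathered}
\end{equation}
so that the cumulative excess error of the transfer-selected family $[\mathpzc{h}^{*}]$ over the no-transfer family $[\mathpzc{h}^{\#}]$ on the observed tasks is at most $\epsilon$, with $\epsilon$ as defined in the statement. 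This step uses only Theorem~\ref{theom:forward} and Theorem~\ref{theom:backward}, together with the fact that, by construction, $[\mathpzc{h}^{*}]$ is the family that jointly minimises the average empirical error across the $n$ tasks while $[\mathpzc{h}^{\#}]$ is its no-transfer counterpart.

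Next I would pass from the observed-task average to the environment error $\mathbb{E}r^{\mathcal{Q}}$. Because the $P_{i}$ are drawn independently at random from $\mathcal{Q}$ and each is represented by a sample $S_{i}$, this is exactly Baxter's (2000) bias-learning setting, and the two hypotheses of the theorem are precisely the sample-size conditions of his main bound: the lower bound on the number of tasks $n$ in terms of the capacity $\mathcal{C}(\epsilon/32,\mathcal{H}^{*})$ controls how well the observed-task performance of any family in $\mathcal{H}^{*}$ estimates its true environment error, and the lower bound on $|S_{i}|$ in terms of $\mathcal{C}(\epsilon/32,\mathcal{H}_{l}^{n})$ controls the per-task sampling error uniformly over $\mathcal{H}_{l}^{n}$. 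Applying this two-sided uniform convergence to both $[\mathpzc{h}^{*}]$ and $[\mathpzc{h}^{\#}]$, and chaining the resulting inequalities with the summed per-task bound above, I would obtain $\mathbb{E}r^{\mathcal{Q}}([\mathpzc{h}^{*}]) \leq \mathbb{E}r^{\mathcal{Q}}([\mathpzc{h}^{\#}]) + \epsilon$; a union bound over a constant number of failure events, absorbed into the $\log(8\mathcal{C}/\delta)$ terms, keeps the total failure probability below $\delta$.

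The main obstacle I expect is the bookkeeping that makes the two ingredients compatible. Theorem~\ref{theom:forward} and Theorem~\ref{theom:backward} are each stated for a single task and each already expends part of the confidence budget, so I must ensure the excess-error terms $(i-1)\epsilon_{f}+(n-i)\epsilon_{b}$ hold simultaneously for all $n$ tasks; I would handle this by reading $[\mathpzc{h}^{*}]$ as the single jointly-selected object of Definitions~\ref{def:definition_forward} and~\ref{def:def_backward}, so that the relevant complexity is the family capacity $\mathcal{C}(\epsilon/32,\mathcal{H}^{*})$ rather than an $n$-fold union, keeping the argument parallel to Baxter's. A second subtlety is reconciling the symbol $\epsilon$ used to size $n$ and $|S_{i}|$ in the hypotheses with the aggregated $\epsilon$ in the conclusion: I would verify that the per-task tolerances $\epsilon_{f}$ and $\epsilon_{b}$ can be driven small enough --- by taking each $|S_{i}|$ correspondingly large, which Eq.~\ref{eq:number_examples_tasks} shows is affordable since the sample size required per task shrinks with $n$ --- that their $\mathcal{O}(n^{2})$ weighted sum matches the target $\epsilon$. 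Finally I would absorb the numerical constants ($64$, $88$, $256$, $32$) when rescaling $\epsilon$ and $\delta$, exactly as in the single-task proofs above.
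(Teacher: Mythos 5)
Your proposal is correct and its first step is exactly the paper's entire proof: the paper sums Eq.~\ref{eq:forward_backward_single} over $i=1,\ldots,n$, identifies $\mathbb{E}r^{\mathcal{Q}}([\mathpzc{h}^{*}])$ and $\mathbb{E}r^{\mathcal{Q}}([\mathpzc{h}^{\#}])$ with the infima of those sums over the \emph{observed} distributions $P_{1},\ldots,P_{n}$ (that is how $\mathbb{E}r^{\mathcal{Q}}$ is defined at the start of Section~\ref{sec:continual}), and stops there. Where you diverge is in the second stage: you treat $\mathbb{E}r^{\mathcal{Q}}$ as a genuine environment-level quantity and invoke Baxter's uniform-convergence machinery --- which is what the hypotheses on $n$ (via $\mathcal{C}(\epsilon/32,\mathcal{H}^{*})$) and on $|S_{i}|$ (via $\mathcal{C}(\epsilon/32,\mathcal{H}_{l}^{n})$) are actually for --- together with a union bound to make the $n$ per-task excess-error terms hold simultaneously. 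The paper never carries out this step; its sample-size conditions appear in the theorem statement but are not used anywhere in the displayed chain of inequalities, and the confidence budget across the $n$ applications of Theorems~\ref{theom:forward} and~\ref{theom:backward} is not accounted for. So your route is strictly more complete: it buys an actual generalization claim about unseen tasks drawn from $\mathcal{Q}$ and a justified failure probability of $\delta$, at the cost of the bookkeeping you correctly flag (reconciling the aggregated $\epsilon$ with the per-task tolerances $\epsilon_{f},\epsilon_{b}$). One small point in your favour: your summed backward term $\sum_{i}(n-i)\epsilon_{b}$ is the one consistent with Eq.~\ref{eq:forward_backward_single}, whereas the paper's statement and proof write $(n-1)\epsilon_{b}$ inside the sum, which appears to be a typo.
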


\begin{proof}
According to Eq. \ref{eq:forward_backward_single}, for all $1 \leq i \leq n$: \begin{equation}
\begin{gathered}
\mathbb{E}r^{P_{i}}([\mathpzc{h}^{*}]) \leq \mathbb{E}r^{P_{i}}([\mathpzc{h}^{\#}]) + (i-1)\epsilon_{f} + (n-i)\epsilon_{b}
\end{gathered}
\end{equation}
which leads to: 
\begin{equation}
\begin{gathered}
\inf_{\mathpzc{h}_{1}, \ldots, \mathpzc{h_{n}} \in [\mathpzc{h^*}]} \sum_{i=1}^{n} \mathbb{E}r^{P_{i}} ([\mathpzc{h_{i}^*}]) \leq \inf_{\mathpzc{h}_{1}, \ldots, \mathpzc{h_{n}} \in [\mathpzc{h}]} \sum_{i=1}^{n} \Big( \mathbb{E}r^{P_{i}} ([\mathpzc{h_{i}^{\#}}])) + (i-1) \epsilon_{f} + (n-1) \epsilon_{b} \Big)
\end{gathered}
\label{eq:bounds_per_task}
\end{equation}
with: 
\begin{equation}
\begin{gathered}
\mathbb{E}r^{\mathcal{Q}}([\mathpzc{h}^{*}]) = \inf_{\mathpzc{h}_{1}, \ldots, \mathpzc{h_{n}} \in [\mathpzc{h^*}]} \sum_{i=1}^{n} \mathbb{E}r^{P_{i}} ([\mathpzc{h^{*}_{i}}])
\end{gathered}
\end{equation}
and:
\begin{equation}
\begin{gathered}
\mathbb{E}r^{\mathcal{Q}}([\mathpzc{h}^{\#}]) = \inf_{\mathpzc{h}_{1}, \ldots, \mathpzc{h_{n}} \in [\mathpzc{h}]} \sum_{i=1}^{n} \mathbb{E}r^{P_{i}} ([\mathpzc{h}_{i}^{\#}])
\end{gathered}
\end{equation}
then:
\begin{equation}
\begin{gathered}
\mathbb{E}r^{\mathcal{Q}}([\mathpzc{h}^{*}]) \leq \mathbb{E}r^{\mathcal{Q}}([\mathpzc{h}^{\#}]) + \epsilon
\end{gathered}
\end{equation}
\end{proof}

Theorem \ref{theom:continual} and its corresponding proof provide the most relevant result of our framework: learning a set of tasks continually with forward and backward transfer will lead to incrementally learning a better bias over the environment itself. Furthermore, the larger the number of tasks $n$, the better the bounds. This can be inferred from Eq. \ref{eq:bounds_per_task}, since for a particular $\mathbb{E}r^{P_{i}} ([\mathpzc{h_{i}^{\#}}]))$ on the right-hand side the larger the number of tasks the larger $i$ and $n$ are, the larger the difference, or gap, with the left-hand side. This implies that the hypothesis space for a particular task which is selected by considering other tasks, \textit{i.e.} via transfer, is a better hypothesis space than would be selected by learning that task in isolation. Since this occurs for all tasks in the environment, the better the bias learned over that environment will be and therefore the better future tasks will be learned, leading to an increasingly knowledgeable system.

As a final remark, note that in practice the bounds in Theorem \ref{theom:continual} depend on the samples $S_{i}$, $1 \leq i \leq n$, drawn from the corresponding $P_{i}$ probability distributions, since this is the data that can be observed while learning. For these bounds to apply, for all $1 \leq i \leq n$, the bounds between $P_{i}$ and $S_{i}$ must satisfy \citep{baxter2000model}:

\begin{equation}
\begin{gathered}
\mathbb{E}r^{P_{i}}(h) \leq \mathbb{\hat{E}}r^{S_{i}}(h) + \Big[ \dfrac{32}{m} \Big( dlog\dfrac{2\epsilon m}{d} + log \dfrac{2}{\delta} \Big) \Big]^{1/2}
\end{gathered}
\label{eq:err_ep_es}
\end{equation}

with $d$ the VC-dimension of $\mathcal{H}$ and $m$ the number of examples. Then, with probability at least $(1 - \delta)$ all $h \in \mathcal{H}$ will satisfy Eq. \ref{eq:err_ep_es}.

\section{Experiments}
\label{sec:experiments}
We experiment with an example inspired by multitask learning \citep{zhang2014regularization}. A continual learner observes a set of regression tasks to learn four functions, three of which are linear and related while one is unrelated (see Figure \ref{fig:diagram2}). We report six different scenarios to prove bounds presented in previous sections: 1) forward transfer from $f_{1}$ to $f_{2}$ (bound in Theorem \ref{theom:forward}), 2) backward transfer from $f_{2}$ to $f_{1}$ (bound in Theorem \ref{theom:backward}), 3) forward transfer from $f_{1}$ and $f_{2}$ to $f_{3}$ (bound in Corollary \ref{cor:forward_order}), 4) backward transfer from $f_{2}$ to $f_{1}$ and then from $f_{3}$ to $f_{1}$ (bound in Corollary \ref{cor:backward_order}), 5) forward transfer from $f_{1}$ to $f_{4}$ (bound in Theorem \ref{theom:forward} for an unrelated task), and 6) backward transfer from $f_{4}$ to $f_{1}$ (bound in Theorem \ref{theom:backward} for an unrelated task). Each of these six scenarios is trained and tested independently from the other scenarios.  We measure the final $R^{2}$ of the three tasks learned by that continual learner. We use a neural network with $1$ hidden layer of $10$ units to learn in each of these scenarios. For each scenario, the task from which transfer occurs is trained only partially (\textit{i.e.} before full convergence), while the task which is subject to transfer is trained until convergence. We generate $30$ random examples for each task, for values of $x$ between $0$ and $10$. We add Gaussian noise with mean $1$ and standard deviation $2$. We split data from each task into training ($75\%$) and test ($25\%$) sets. We repeat sampling, splitting, training and testing $10$ times. Table \ref{tab:experiments} shows that transfer across related tasks (scenarios $1$ to $4$) benefits $R^{2}$ performance. Our main finding is that, by training source tasks only partially, we are able to keep the hypothesis space large enough for the backward/forward transfer to have an effect. This also allows exploring a larger set of $\mathcal{F}$ transformation functions between hypothesis spaces, which appears to be critical for transfer. The practical implication of this is that we will need to store partially converged versions of each task's model for future transfer.

\begin{figure}
\centering
\includegraphics[scale = 0.32]{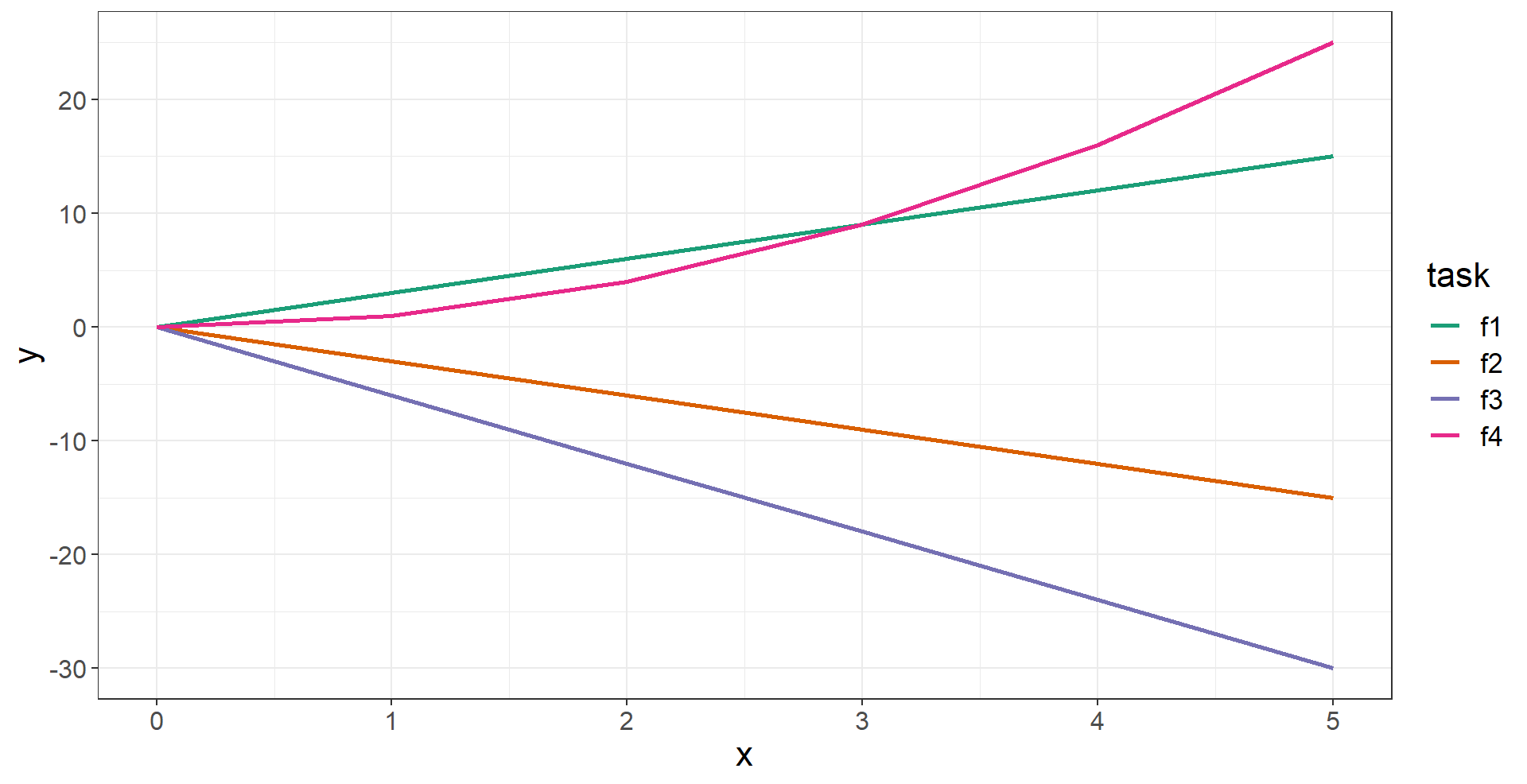} 
\caption{A set of problems or tasks. Three of these are linear functions and are related, while one is an unrelated task.}
\label{fig:diagram2}
\end{figure}

\begin{table}
\centering
\small
\begin{tabular}{|c|c|c|c|c|c|}
     \hline \textbf{Scenario} & $R^{2} f_{1}$ & $R^{2} f_{2}$ & $R^{2} f_{3}$ & $R^{2} f_{4}$ & $R^{2} f_{1}, f_{2}, f_{3}$ \\ \hline
     Isolated learning & $0.9997\pm0.0002$ & $0.999\pm0.001$ & $0.999\pm0.001$ & $0.9842\pm0.0382$ & $0.9955\pm0.0101$ \\ \hline
     Isolated learning (noise) & $0.8463\pm0.033$ & $0.8737\pm0.0127$ & $0.896\pm0.0133$ & $0.7825\pm0.0754$ & $0.8496\pm0.0034$ \\ \hline
     Scenario 1 (noise) & $--$ & $0.8919\pm0.0174$ & $--$ & $--$ & $--$ \\ \hline
     Scenario 2 (noise) & $0.9613\pm0.0181$ & $--$ & $--$ & $--$  & $--$ \\ \hline
     Scenario 3 (noise) & $--$ & $--$ & $0.9137\pm0.0135$ & $--$  & $--$ \\ \hline
     Scenario 4 (noise) & $0.9636\pm0.0172$ & $--$ & $--$ & $--$ & $--$ \\ \hline
     Scenario 5 (noise) & $--$ & $--$ & $--$ & $0.4322\pm0.0933$  & $--$ \\ \hline
     Scenario 6 (noise) & $0.7035\pm0.0364$ & $--$ & $--$ & $--$ & $--$ \\ \hline
\end{tabular}
\caption{Mean $R^{2}$, and their standard deviations, of six transfer scenarios on a toy example of four functions: $y_{1} = -3x + 10$, $y_{2} = -3x - 5$, $y_{3} = -6x - 12$ and $y_{4} = x^{2}$ ($--$ denotes not applicable).}
\label{tab:experiments}
\end{table}
\vspace{-2mm}
\section{Discussion and Conclusion}
\label{sec:discussion_conclusion}
We proposed a theory for knowledge transfer in supervised continual learning. We aim to encourage further research in knowledge transfer for achieving increasingly knowledgeable continual learning systems. Our proposed framework relies on the assumption of relatedness among tasks in a specific environment. This assumption may be applicable to a variety of domains that learn different but related tasks, \textit{e.g.} tasks in a clinical domain, tasks in manufacturing, etc.  Our error bounds are agnostic to the implementation of the continual learner. One would naturally wonder how these bounds apply to implementations with deep neural networks. Since those bounds depend on the number of examples per task, which itself depends on the number of tasks (see Eq. \ref{eq:number_examples_tasks}), it is possible that some data from previous tasks will be needed. Strategies such as memories per task or generative models, which have been used in several studies, could be helpful. We also believe that modular or semi-modular networks, with specialized components for each task, will potentially be necessary for effective knowledge transfer. Furthermore, having specialized modules for representing the set of transformations between hypothesis spaces for each task could be helpful. We also hypothesise that scenarios such as class-incremental learning of related classes could benefit more from transfer than scenarios such as task or domain-incremental learning, although more research is required in this direction. 

Recent work suggests that the framework of the VC-dimension is not appropriate for deep neural networks. Other frameworks based on infinite-width networks \citep{golikov2020towards} and robustness-based networks \citep{bubeck2021law} have been widely studied for deep neural networks that observe examples of all tasks at once. Although most work in supervised continual learning has used deep neural networks, the infinite-width and robustness-based frameworks have not been analysed with the lens of learning incrementally. Intuitively, overparameterised networks would imply a bigger challenge for continual learning, as this overparameterisation would lead to excellent performance on a particular task, making the network harder to adapt to subsequent ones. Extending studies on infinite-width or robustness-based networks to the challenges of continual learning would be an interesting avenue of research.

\bibliography{collas2022_conference}
\bibliographystyle{collas2022_conference}

\newpage
\section*{Appendix A}
\label{appendix:A}
Proof for Corollary \ref{cor:forward_order}:
\begin{proof} 
By Theorem 2 in Ben-David and Borbely, at time $n+z$:
\begin{equation}
\begin{gathered}
\mathbb{E}r^{P^{(t)}}_{n+z}([\mathpzc{h}^{*}_{n+z}]) \leq \inf_{\mathpzc{h}_{1}, \ldots, \mathpzc{h_{n+z}} \in [\mathpzc{h_{n+z}^*}]} \dfrac{1}{n+z} \sum_{i=1}^{n+z}\mathbb{\hat{E}}r^{S^{(s)}_{i}} (\mathpzc{h_{i}}) + \epsilon_{n+z}
\end{gathered}
\end{equation}
while, at time $n$:
\begin{equation}
\begin{gathered}
\mathbb{E}r^{P^{(t)}}_{n}([\mathpzc{h}^{*}_{n}]) \leq \inf_{\mathpzc{h}_{1}, \ldots, \mathpzc{h_{n}} \in [\mathpzc{h_{n}^*}]} \dfrac{1}{n} \sum_{i=1}^{n}\mathbb{\hat{E}}r^{S^{(s)}_{i}} (\mathpzc{h_{i}}) + \epsilon_{n}
\end{gathered}
\end{equation}
And by Ben-David and Borbely (2008), and also Baxter (2000):
\begin{equation}
\begin{gathered}
\inf_{\mathpzc{h}_{1}, \ldots, \mathpzc{h_{n+z}} \in [\mathpzc{h_{n+z}^*}]} \dfrac{1}{n+z} \sum_{i=1}^{n+z}\mathbb{\hat{E}}r^{S^{(s)}_{i}} (\mathpzc{h}_{i})  \leq \inf_{\mathpzc{h}_{1}, \ldots, \mathpzc{h_{n}} \in [\mathpzc{h_{n}^*}]} \dfrac{1}{n} \sum_{i=1}^{n}\mathbb{\hat{E}}r^{S^{(s)}_{i}} (\mathpzc{h_{i}})  
\end{gathered}
\label{eq:inf_nz_n}
\end{equation}
Therefore:
\begin{equation}
\begin{gathered}
\mathbb{E}r^{P^{(t)}}_{n+z}([\mathpzc{h}^{*}_{n+z}]) \leq \mathbb{E}r^{P^{(t)}}_{n}([\mathpzc{h}^{*}_{n}]) + \epsilon_{n} + \epsilon_{n+z}
\end{gathered}
\end{equation}
With $\mathbb{E}r^{P^{(t)}}_{n+z}(\mathpzc{h}^{\diamond}_{n+z})$ the best hypothesis in $\mathbb{E}r^{P^{(t)}}_{n+z}([\mathpzc{h}^{*}_{n+z}])$ and $\mathbb{E}r^{P^{(t)}}_{n}(\mathpzc{h}^{\diamond}_{n})$ the best hypothesis in $\mathbb{E}r^{P^{(t)}}_{n}([\mathpzc{h}^{*}_{n}])$, the theorem is proved.
\end{proof}

\newpage
\section*{Appendix B}
\label{appendix:b}
Proof for Corollary \ref{cor:backward_order}:
\begin{proof} 
By Theorem 2 in Ben-David and Borbely, at time $n+1$:
\begin{equation}
\begin{gathered}
\mathbb{E}r^{P^{(s)}}_{n+1}([\mathpzc{h}^{*}_{n+1}]) \leq \inf_{\mathpzc{h}^{(s)}, \mathpzc{h}^{(t)}_{n}, \mathpzc{h}^{(t)}_{n+1} \in [\mathpzc{h_{n+1}^*}]} \big( \mathbb{\hat{E}}r^{S^{(s)}} (\mathpzc{h}^{(s)}) + \mathbb{\hat{E}}r^{S^{(t)}_{n}} (\mathpzc{h}^{(t)}_{n}) + \mathbb{\hat{E}}r^{S^{(t)}_{n+1}} (\mathpzc{h}^{(t)}_{n+1}) \big) +\epsilon_{n+1}
\end{gathered}
\end{equation}
while, at time $n$:
\begin{equation}
\begin{gathered}
\mathbb{E}r^{P^{(s)}}_{n}([\mathpzc{h}^{*}_{n}]) \leq \inf_{\mathpzc{h}^{(s)}, \mathpzc{h}^{(t)}_{n} \in [\mathpzc{h_{n}^*}]} \big( \mathbb{\hat{E}}r^{S^{(s)}} (\mathpzc{h}^{(s)}) + \mathbb{\hat{E}}r^{S^{(t)}_{n}} (\mathpzc{h}^{(t)}_{n}) \big) + \epsilon_{n}\end{gathered}
\end{equation}
And by Ben-David and Borbely (2008), and also Baxter (2000):
\begin{equation}
\begin{gathered}
\inf_{\mathpzc{h}_{1}, \ldots, \mathpzc{h_{n+1}} \in [\mathpzc{h_{n+1}^*}]} \dfrac{1}{n+1} \sum_{i=1}^{n+1}\mathbb{\hat{E}}r^{S^{(s)}_{i}} (\mathpzc{h}_{i}) \leq \inf_{\mathpzc{h}_{1}, \ldots, \mathpzc{h_{n}} \in [\mathpzc{h_{n}^*}]} \dfrac{1}{n} \sum_{i=1}^{n}\mathbb{\hat{E}}r^{S^{(s)}_{i}} (\mathpzc{h}_{i}) 
\end{gathered}
\end{equation}
Therefore:
\begin{equation}
\begin{gathered}
\mathbb{E}r^{P^{(s)}}_{n+1}([\mathpzc{h}^{*}_{n+1}]) \leq \mathbb{E}r^{P^{(s)}}_{n}([\mathpzc{h}^{*}_{n}]) + \epsilon_{n} + \epsilon_{n+1}
\end{gathered}
\end{equation}
With $\mathbb{E}r^{P^{(s)}}_{n+1}(\mathpzc{h}^{\diamond}_{n+1})$ the best hypothesis in $\mathbb{E}r^{P^{(s)}}_{n+1}([\mathpzc{h}^{*}_{n+1}])$ and $\mathbb{E}r^{P^{(s)}}_{n}(\mathpzc{h}^{\diamond}_{n})$ the best hypothesis in $\mathbb{E}r^{P^{(s)}}_{n}([\mathpzc{h}^{*}_{n}])$, the theorem is proved.
\end{proof}

\end{document}